\documentclass[author-year, review]{elsarticle}
\usepackage[margin=3cm]{geometry}

\usepackage{lineno,hyperref}
\modulolinenumbers[1]

\journal{Journal of \LaTeX\ Templates}

\usepackage{setspace}
\usepackage{amssymb}
\usepackage{amsmath}
\usepackage{latexsym}
\usepackage{mathabx}
\usepackage{amsthm}
\usepackage{epigraph}
\usepackage{booktabs}
\usepackage{braket}
\usepackage{bbm}
\usepackage{appendix}
\usepackage{wrapfig}
\usepackage{graphicx}
\usepackage{array,multirow}
\usepackage{float}
\usepackage{pdflscape}
\usepackage{epstopdf}
\usepackage{pdfpages}
\usepackage{pgfplots}
\usepackage{longtable}
\usepackage{changepage}
\usepackage{soul}
\soulregister\cite7
\soulregister\ref7
\soulregister\eqref7
\soulregister\pageref7
\usepackage{cancel}
\usetikzlibrary{matrix,arrows,decorations.pathmorphing,backgrounds,fit,positioning}
\DeclareMathOperator*{\argmax}{arg\,max}
\DeclareMathOperator*{\argmin}{arg\,min}
\DeclareMathOperator{\sign}{sign}
\newcommand{\norm}[1]{\left\lVert#1\right\rVert}

\newcommand{\abs}[1]{\left\lvert#1\right\rvert}
\newcommand\RotText[1]{\fontsize{9}{9}\selectfont
  \rotatebox[origin=c]{90}{\parbox{3.5cm}{%
\centering#1}}}
\usepackage{pifont}% http://ctan.org/pkg/pifont
\newcommand{\cmark}{\ding{51}}%
%
%\linespread{1.3} 
%\setlength{\parindent}{0pt}
\theoremstyle{definition}
\newtheorem{theorem}{Theorem}

\newtheorem{corollary}{Corollary}

\usepackage[font={footnotesize}]{caption}
\usepackage[font={scriptsize}]{subcaption}

\textwidth 5.9in \oddsidemargin 0.3in \evensidemargin 0.3in
%\global\long\def\sgn{\mathop{\mathrm{sgn}}}
%\setlength {\marginparwidth }{2cm}

\usepackage{algorithm,algorithmic}
\makeatletter
\renewcommand{\ALG@name}{Pseudocode}
\makeatother

%%%%%%%%%%%%%%%%%%%%%%%
%% Elsevier bibliography styles

%% Harvard numbered
\bibliographystyle{elsarticle-num} 
%\usepackage{numcompress}\bibliographystyle{model2-names.bst}
%\usepackage{numcompress}\bibliographystyle{model5-names.bst}\biboptions{authoryear}
%%%%%%%%%%%%%%%%%%%%%%%

% Rimuovo "Preprint submitted to..." e la data (primo foglio)
\makeatletter
\def\ps@pprintTitle{%
 \let\@oddhead\@empty
 \let\@evenhead\@empty
 \def\@oddfoot{}%
 \let\@evenfoot\@oddfoot}
\makeatother

% Comando per fare iniziare le equazioni al bordo a sinistra e comando per restore normalmente
\makeatletter
\newcommand{\mathleft}{\@fleqntrue\@mathmargin0pt}
\newcommand{\mathcenter}{\@fleqnfalse}
\makeatother

\onehalfspacing
\begin{document}
\nolinenumbers

\begin{frontmatter}

\title{A Robust Twin Parametric Margin Support Vector Machine for Multiclass Classification}

\author[DeLeone_address]{Renato De Leone}
%\ead{renato.deleone@unicam.it}

\author[Maggioni_address]{Francesca Maggioni\corref{mycorrespondingauthor}}
\cortext[mycorrespondingauthor]{Corresponding author}
\ead{francesca.maggioni@unibg.it}

\author[Maggioni_address]{Andrea Spinelli}
%\ead{andrea.spinelli@guest.unibg.it}

\address[DeLeone_address]{School of Science and Technology, University of Camerino, Via Madonna delle Carceri 9, Camerino 62032, Italy}
\address[Maggioni_address]{Department of Management, Information and Production Engineering, University of Bergamo, Viale G. Marconi 5, Dalmine 24044, Italy}

\begin{abstract}
In this paper, we introduce novel Twin Parametric Margin Support Vector Machine (TPMSVM) models designed to address multiclass classification tasks under feature uncertainty. To handle data perturbations, we construct bounded-by-norm uncertainty set around each training observation and derive the robust counterparts of the deterministic models using robust optimization techniques. To capture complex data structure, we explore both linear and kernel-induced classifiers, providing computationally tractable reformulations of the resulting robust models. Additionally, we propose two alternatives for the final decision function, enhancing models' flexibility. Finally, we validate the effectiveness of the proposed robust multiclass TPMSVM methodology on real-world datasets, showing the good performance of the approach in the presence of uncertainty.
\end{abstract}

\begin{keyword}
Machine Learning \sep Support Vector Machine \sep Robust Optimization \sep Multiclass Classification
%\texttt{elsarticle.cls}\sep \LaTeX\sep Elsevier \sep template
%\MSC[2010] 00-01\sep  99-00
\end{keyword}

\end{frontmatter}

%\linenumbers
\nolinenumbers

\section{Introduction}
Supervised classification is one of the most extensively studied tasks in \emph{Machine Learning} (ML) thanks to its wide variety of application fields (see \cite{DiaAma2023}). Although deep learning algorithms and neural networks currently represent the state-of-the-art paradigms in supervised classification, such methodologies do not always guarantee strong predictive accuracies, particularly when applied to tabular data (see \cite{GunSepBaeOskLem2021}). Additionally, these techniques tend to be inefficient for low-dimensional inputs because of their high degree of overparametrization (see \cite{CipGon2022}). For this reason, the design of innovative data-driven approaches in addressing supervised classification tasks remains a significant field of research (see \cite{MalLopCar2022}).

According to \cite{Pedro2017}, two optimization-based approaches to supervised classification can be identified in the literature. The first relies on classification rules derived from mathematical programming models. These techniques aim to minimize misclassification error by optimizing observable criteria. This approach, primarily developed within the Operations Research literature, can be seen as an extension of the Fisher's \textit{Linear Discriminant Function} (see \cite{fisher1936use}). The second group of methods, rooted in statistical learning theory, focuses on developing predictive algorithms with guaranteed generalization ability. These classifiers are constructed by optimizing sample accuracy measures associated with distribution-free bounds on misclassification probabilities. This line of research has led to the development of \textit{Support Vector Machines} (SVMs, see \cite{VapChe1974}).

Classical SVMs consist in identifying the best classifier in the form of a hyperplane or a kernel-induced decision boundary that geometrically separates two sets of labelled training data using a structural risk minimization principle (see \cite{Vap1995}). Due to their simplicity and efficiency, SVMs have received strong attention in the ML literature (see \cite{JiaSid2020}). Significant methodological developments have been devised (see, for instance, \cite{CarNogRom2017, CorVap1995, JayKheCha2007, LiuPot2009,MarDeL2020, SchSmoWilBar2000}), making SVMs one of the most powerful tools for supervised classification (see \cite{Pedro2017}). The range of applications includes finance (see \cite{DeL2010,IslKwoKim2025}), scheduling (see \cite{GolZhaDre2023}), and business analytics (see \cite{MalDomOlaVerb2021,MalLopVai2020}), to name a few.

Among all the possible SVM variants, in this paper, we focus on the \emph{Twin Parametric Margin Support Vector Machine} (TPMSVM, see \cite{Peng2011}). According to this method, two nonparallel classifiers, one for each class, are identified such that the training observations of the other class are as far as possible from the opposite classifier. The main advantage of this approach lies in its reduced computational complexity since each of the two classifiers is the solution of a small-sized optimization model. Empirical evidence suggests that TPMSVM achieves superior predictive accuracy compared to other SVM-type techniques (see \cite{Peng2011,DeLMagSpiLOD2024}).

Optimization approaches for supervised classification, including SVMs, were traditionally designed to tackle binary classification tasks. However, many real-world applications involve multigroup problems (see \cite{MaqDam2023}), requiring the development of specific methods to be addressed (see \cite{WesWat1998}). Typically, these problems are decomposed into a finite sequence of binary classification tasks, whose solutions are finally combined into an aggregate decision function (see \cite{HsuLin2002}). Depending on how the decomposition and the following reconstruction are performed, various approaches have been proposed (see \cite{DingZhaZhaZhaXue2019}). Nevertheless, multiclass classification problems remain less explored in the ML literature due to their higher computational complexity (see \cite{PengKouWangShi2011}). For this reason, developing new algorithms for multiclass SVMs is considered a promising research area (see \cite{Pedro2017}).

In supervised classification methods, input data are supposed to be known exactly when training the models. However, real-world observations are often subject to noise and perturbations due to errors in the data collection process or to limited precision of the measurement instruments. In recent years, various techniques have been explored to address uncertainty in classification problems. Among these, \emph{Robust Optimization} (RO) is one of the most widely studied paradigms in the ML literature (see \cite{Ben-TalElGNem2009}). RO techniques protect the optimization model against the worst possible realizations of the random parameters within a prescribed uncertainty set. Different uncertainty sets lead to distinct solutions, depending on the level of conservatism and on the severity of the perturbation. Common choices to define the uncertainty sets often employ $\ell_p$-norms (see \cite{BerDunPawZhu2019, TraGil2006}). When RO techniques are applied, the predictive performance of supervised classification algorithms is enhanced, especially in the case of SVMs (see \cite{FacMagPot2022,MagSpi2023}). Therefore, designing new robust models for SVMs represents a relevant research direction.

In this paper, we present novel TPMSVM-type models aiming at separating multiple classes of data under feature uncertainty. The formulation builds upon the work of \cite{Peng2011} and introduce two key innovations. First, we handle multigroup classification problems instead of two-group tasks. Second, we apply robust optimization techniques to protect the multiclass models against uncertainty. Specifically, we consider bounded-by-$\ell_p$-norm uncertainty sets around each training observation and derive the robust counterpart of the deterministic approach. In addition, we provide computationally tractable reformulations of the robust models in the form of \emph{Second Order Cone Programming} (SOCP) models. To improve the generalization capability of the proposal, all the results are derived using both linear and kernel-induced classifiers. To the best of our knowledge, this is the first work in the ML literature to introduce a robust TPMSVM approach for addressing multiclass classification tasks using linear and kernel-induced classifiers under bounded-by-$\ell_p$-norm uncertainty sets.

The main contributions of this paper are four-fold and can be summarized as follows:
\begin{itemize}
\item to propose new multiclass TPMSVM-type models with both linear and kernel-induced classifiers;
\item to formulate the robust counterparts of the deterministic multiclass models using bounded-by-$\ell_p$-norm uncertainty sets;
\item to derive computationally tractable reformulations of the robust multiclass models as SOCP models for typical choices of the $\ell_p$-norm;
\item to provide numerical experiments on real-world datasets with the aim of evaluating the performance of the proposed approach and showing the advantages of explicitly accounting for uncertainty in the novel TPMSVM formulations.
\end{itemize}

The remainder of the paper is organized as follows. Section \ref{sec_literature_review} reviews the existing literature on the problem. Section \ref{sec_background} introduces basic facts on binary TPMSVM model. In Section \ref{sec_multiclass_deterministic} the deterministic multiclass models are designed, while in Section \ref{sec_multiclass_robust} the robust counterparts are presented. Section \ref{sec_numerical_results} reports computational results to evaluate the accuracy of the proposed formulations. Finally, in Section \ref{sec_conclusions} conclusions and future works are discussed.

Throughout the paper, all vectors are column vectors, unless transposed to row vectors by the superscript ``$^\top$''. For $p\in[1,\infty]$ and $a\in\mathbb{R}^n$, $\norm{a}_p$ is the $\ell_p$-norm of $a$. The dot product in a inner product space $\mathcal{H}$ will be denoted by $\langle \cdot,\cdot\rangle_{\mathcal{H}}$. If $\mathcal{H}=\mathbb{R}^n$ and $a,b\in\mathbb{R}^n$, $a^\top b$ is equivalent to $\langle a, b \rangle_{\mathcal{H}}$. If $\mathcal{A}$ is a set, $\abs{\mathcal{A}}$ represents its cardinality. In addition, we denote by $e_n$ the column vector of ones in $\mathbb{R}^n$.

\section{Literature review} \label{sec_literature_review}
Classical SVM is first introduced in \cite{VapChe1974} with the aim of solving classification problems involving two linearly separable sets. The generalization of the linear approach is proposed in \cite{BosGuyVap1992} by including nonlinear decision boundaries induced by kernel functions. Cases of non perfectly separable training data are considered in \cite{CorVap1995}, where a vector of slack variables is introduced into the SVM model. The resulting formulation looks for a trade-off between the maximization of the margin, as in the classical SVM approach of \cite{VapChe1974}, and the minimization of an empirical risk related to the slack variables.

When the classification problem involves more than two groups, solution strategies in the SVM literature fall into two categories: \emph{all-together} methods (see \cite{CraSin2001}) and \emph{decomposition-reconstruction} methods (see \cite{HsuLin2002}). The former considers all training data at the same time within a single optimization model to derive a unified classifier (see \cite{WesWat1998,BreBen1999,Yaj2005}). In contrast, the latter decomposes the problem into a sequence of classification tasks, solving each independently and then combining the resulting SVM-classifiers into an aggregate multiclass decision function. Within this paradigm, various formulations have been proposed. In the \emph{One-Versus-All} strategy (OVA, see \cite{RifKla2004}), a classifier is constructed for each class aiming to separate data points belonging to that class from all others. In the \emph{One-Versus-One} approach (OVO, see \cite{LiTanYan2002}), a binary classification problem is solved for each pair of classes. Conversely, in the \emph{One-Versus-One-Versus-Rest} strategy (OVOVR, see \cite{AngParCat2003}), each subproblem focuses on the separation of a pair of classes together with all the remaining samples by means of two classifiers. These classifiers are close to their respective class, while as far as possible from the other. At the same time, all remaining points are restricted in a region between the two classifiers. Decomposition-reconstruction methods are generally considered as the most effective for multiclass classification problems (see \cite{DuZhaCheSunCheShao2021}), especially due to the high computational complexity of all-together methods when handling large datasets (see \cite{DingZhaZhaZhaXue2019}). However, efforts have been made in the literature to overcome these limitations and to unify existing all-together methods (see \cite{DogGlaIge2016}).

To avoid low classification accuracy when training data are affected by perturbations, optimization under uncertainty techniques are employed within the SVM context. Specifically, \textit{Robust Optimization} (RO, see \cite{XuCarMan2009}), \textit{Distributionally Robust Optimization} (DRO, see \cite{RahMeh2022}) and \textit{Chance-Constrained Programming} (CCP, see \cite{ShaDenRus2009}) are some of the most extensively studied approaches. Robust formulations of standard classification methods, including SVM, are provided in \cite{BerDunPawZhu2019}. Methodological advancements and applications of RO techniques to SVMs are discussed in \cite{FacMagPot2022,MagSpi2023,MagFacGheManBonORAHS2022,MagSpiODS2024,PiaSpiMagBedMes2025}. Recently, an adjustable RO approach for SVM under uncertainty has been proposed in \cite{HooSeiMir2025}. Within the multiclass framework, in \cite{ZhongFuk2007} a RO model for SVM is derived through piecewise-linear functions, robustifying the approach of \cite{BreBen1999} in the case of ellipsoidal uncertainty sets. Finally, \cite{Kha-ShiBab-AzaHos-NodPar2023} and \cite{LinFangFangGao2024,LinFangFangGaoLuo2024} investigate the integration of CCP and DRO techniques into linear and kernel-induced SVM models, respectively.

Up to this point, we have outlined the general framework of SVMs, including approaches for multiclass classification problems and focusing on optimization techniques for uncertain data. In the following, we discuss SVM variants that are related to the current proposal. We start with the $\nu$\emph{-Support Vector Classification} ($\nu$-SVC) approach, designed in \cite{SchSmoWilBar2000}. Compared with the classical SVM presented in \cite{VapChe1974}, a positive parameter $\nu$ is introduced in the risk minimization function to bound the fractions of support vectors and misclassification errors. Relying on this approach, in \cite{Hao2010} a \emph{parametric margin} model (the par-$\nu$-SVM) is formulated to deal with cases of heteroscedastic noise. Rather than dealing with parallel hyperplanes as in \cite{CorVap1995}, the \emph{TWin Support Vector Machine} (TWSVM) considers a pair of nonparallel classifiers as solutions of two small-sized SVM-type models (see \cite{JayKheCha2007}). Consequently, the computational complexity of TWSVM is much reduced compared with the SVM approach of \cite{CorVap1995}. Due to its favourable performance, especially when handling large datasets, many variants of the TWSVM have been devised in the ML literature: \emph{Least Squares TWSVM} (LS-TWSVM, see \cite{KumGop2009}), \emph{Projection TWSVM} (P-TWSVM, see \cite{CheYanYeLian2011}), \emph{Twin Parametric Margin SVM} (TPMSVM, see \cite{Peng2011}), \emph{Pinball loss TWSVM} (Pin-TWSVM, see \cite{XuYangXianli2017}), \emph{New Fuzzy TWSVM} (NFTWSVM, see \cite{ChenWu2018}). For a comprehensive overview on recent developments on TWSVM the reader is referred to \cite{TanRajRasShaoGan2022}.

Among all the possible TWSVM alternatives, in this paper we focus on the TPMSVM. This formulation combines the contributions of the TWSVM and the par-$\nu$-SVM. Specifically, the TPMSVM constructs two nonparallel classifiers, each of them determining the positive or negative parametric margin, by solving two small-sized optimization models. Therefore, this approach integrates the fast learning speed of the TWSVM and the flexible parametric margin of the par-$\nu$-SVM. Alternative TPMSVM-based formulations are \emph{Structural TPMSVM} (STPMSVM, see \cite{PengWangXu2013}), \emph{Least Squares TPMSVM} (LSTPMSVM, see \cite{ShaoWangChenDeng2013}), \emph{Smooth TPMSVM} (STPMSVM, see \cite{WangShaoWu2013}), \emph{Centroid-based TPMSVM} (CTPSVM, see \cite{PengKongChen2015}), \emph{Truncated} \emph{Pinball} \emph{Loss} \emph{TPMSVM} (TPin-TSVM, see \cite{WangXuZhou2021}).

Similar to SVM, TWSVM and its variants were originally designed for binary classification. To address multiclass classification problems, both all-together and decomposition-reconstruction methods have been explored in the TWSVM literature (see \cite{DingZhaZhaZhaXue2019,WangXuZhou2021,XieHoneXieGaoShiLiu2013,XuGuoWang2013,DuZhaCheSunCheShao2021}). In addition to these strategies, we mention the \textit{Directed Acyclic Graph TWSVM} (DAG TWSVM, \cite{TomAga2015}), the \textit{Binary Tree TWSVM} (BT TWSVM, \cite{ShaoChenHuanYanDen2013}), and the \textit{Multiple Birth SVM} (MBSVM, \cite{YangShaoZhang2013}). A comprehensive survey on multiclass formulations specifically designed for TWSVM can be found in \cite{DingZhaZhaZhaXue2019}.

In the context of optimization under uncertainty approaches for TWSVM, various techniques have been investigated. Within the RO framework, in \cite{PengXu2013} a \emph{Robust Minimum Class Variance} model (RMCV-TWSVM) is introduced. To handle data uncertainty, a pair of class variance-covariance matrices is considered, with uncertainty sets defined according to the Frobenius norm. In \cite{QiTianShi2013}, two nonparallel classifiers are proposed in the case of ellipsoidal uncertainty sets. The corresponding model, called R-TWSVM, is then reformulated as a SOCP model. Instead of convex hulls to represent the training patterns, the \textit{Robust NonParallel SVM} (RNPSVM, see \cite{LopMalCar2019}) and the TWin SOCP-SVM (see \cite{MalLopCar2016}) consider ellipsoids defined by the first two moments of the class distributions. The models are formulated through CCP techniques, leading to robust SOCP models. A similar CCP approach is employed in \cite{LopMalCar2017} for the case of twin multiclass SVM (TWin-KSOCP). Recently, in \cite{SahSal2022} an improved CCP version of RNPSVM, named IRNPSVM, has been designed to control the number of missing data. Regarding techniques for multiclass classification problems, \cite{DeLMagSpiLOD2024} presents a robust TPMSVM model with an application to vehicle emissions. To the best of our knowledge, this is the first contribution to handle multi-group classification problems using a robust TPMSVM-based methodology. However, \cite{DeLMagSpiLOD2024} considers only linear classifiers and spherical uncertainty sets, without addressing cases with general kernel functions or bounded-by-$\ell_p$-norm uncertainty sets.

All the approaches discussed so far on the TWSVM and its variants are schematically reported in Figure \ref{fig_TWSVMscheme} and listed in Table \ref{tab_horizontalSVM_literature_review}.

As discussed above, although robust optimization techniques have been applied to TWSVMs, there remains a gap in the literature concerning robust TPMSVM models. In this regard, the contributions of this paper differ from previous work in several aspects. First of all, we present novel TPMSVM-type models to address multiclass classification tasks, extending the binary approach of \cite{Peng2011}, both for linear and kernel-induced classifiers. Secondly, we consider bounded-by-$\ell_p$-norm uncertainty sets around training observations, employing general kernel functions for nonlinear classification problems. Thirdly, we derive the robust counterpart of the deterministic multiclass TPMSVM formulations, protecting the models against feature uncertainty. Finally, we provide tractable reformulations for all robust models as SOCP models, offering clear advantages in terms of computational efficiency.

\begin{figure}[h!]
%\centering
\resizebox{\textwidth}{!}{
\begin{tikzpicture}[main/.style = {draw}] 

\node[main] (TPMSVM) {
\begin{tabular}{c}
\textbf{TPMSVM}\\
\footnotesize Peng \cite{Peng2011}
\end{tabular}
};

\node[main] (parnuSVM) [above left =12mm and -10mm of TPMSVM] {
\begin{tabular}{c}
\textbf{par-$\nu$-SVM}\\
\footnotesize Hao \cite{Hao2010}
\end{tabular}
};

\node[main] (TWSVM) [above right = 12mm and 2mm of TPMSVM] {
\begin{tabular}{c}
\textbf{TWSVM}\\
\footnotesize Jayadeva et al. \cite{JayKheCha2007}
\end{tabular}
};

\node[main] (varTPMSVM)  [below left = 12mm and -11mm of TPMSVM] {
\begin{tabular}{c}
\textbf{Variants of TPMSVM}
\end{tabular}
};

\node[text width = 0.1cm] (listvarTPMSVM) [below left = 3mm and 0mm of varTPMSVM] {
$
\begin{tabular}{c}
\small Deterministic\\
\small approaches
\end{tabular}
\left\{
 \begin{tabular}{l}
      $-$ STPMSVM \footnotesize Peng et al. \cite{PengWangXu2013}\\
     $-$ LSTPMSVM \footnotesize Shao et al. \cite{ShaoWangChenDeng2013}\\
      $-$ STPMSVM \footnotesize Wang et al. \cite{WangShaoWu2013}\\
           $-$ CTPSVM \footnotesize Peng et al. \cite{PengKongChen2015}\\
      $-$ OVOVR-TPMSVM \footnotesize Du et al. \cite{DuZhaCheSunCheShao2021}\\
      $-$ TPin-TSVM \footnotesize Wang et al. \cite{WangXuZhou2021}
    \end{tabular}
    \right.
    $
        \begin{tabular}{l}
    \text{}
    \end{tabular}
$
\begin{tabular}{c}
\small {Optimization}\\
\small {under}\\
\small {uncertainty}\\
\small {approaches}
\end{tabular}
\left\{
 \begin{tabular}{l}
      $-$ RO-TPMSVM \footnotesize De Leone et al. \cite{DeLMagSpiLOD2024}
    \end{tabular}
    \right.
    $
};

\node[main] (varTWSVM)  [below right = 12mm and -15mm of TWSVM] {
\begin{tabular}{c}
\textbf{Variants of TWSVM}
\end{tabular}
};

\node[text width = 0.1cm] (listvarTWSVM) [below left = 2mm and 0mm of varTWSVM] {
$
\left.
 \begin{tabular}{l}
      $-$ LS-TWSVM \footnotesize Arun Kumar \& Gopal \cite{KumGop2009} \text{}\\
      $-$ P-TWSVM \footnotesize Chen et al. \cite{CheYanYeLian2011}\\
      $-$ BT TWSVM \footnotesize Shao et al. \cite{ShaoChenHuanYanDen2013}\\
      $-$ OVA-TWSVM \footnotesize Xie et al., Wang et al. \cite{WangXuZhou2021,XieHoneXieGaoShiLiu2013}\\

      $-$ OVO-TWSVM \footnotesize Ding et al. \cite{DingZhaZhaZhaXue2019}\\
      $-$ OVOVR-TWSVM \footnotesize Xu et al. \cite{XuGuoWang2013}\\
      $-$ MBSVM \footnotesize Yang et al. \cite{YangShaoZhang2013}\\
       $-$ Pin-TWSVM \footnotesize Xu et al. \cite{XuYangXianli2017}\\
      $-$ NFTWSVM \footnotesize Chen \& Wu \cite{ChenWu2018}\\
      $-$ DAG TWSVM \footnotesize Tomar \& Agarwal \cite{TomAga2015}
    \end{tabular}
    \right\}
  \begin{tabular}{c}
\small Deterministic\\
\small approaches
  \end{tabular}
    $
    \begin{tabular}{l}
    \text{}
    \end{tabular}
$
  \left.
    \begin{tabular}{l}
      $-$ RMCV-TWSVM \footnotesize Peng \& Xu \cite{PengXu2013}\\
      $-$ R-TWSVM \footnotesize Qi et al. \cite{QiTianShi2013}\\
      $-$ TWin SOCP-SVM \footnotesize Maldonado et al. \cite{MalLopCar2016}\\
      $-$ TWin-KSOCP \footnotesize L\'opez et al. \cite{LopMalCar2017}\\
      $-$ RNPSVM \footnotesize L\'opez et al. \cite{LopMalCar2019}\\
      $-$ IRNPSVM \footnotesize Sahleh \& Salahi \cite{SahSal2022}
    \end{tabular}
  \right\}
  \begin{tabular}{c}
\small Optimization\\
\small under\\
\small uncertainty\\
\small approaches
  \end{tabular}
$
};

\draw[-latex, thick] (parnuSVM) -- (TPMSVM); 
\draw[-latex, thick] (TWSVM) -- (TPMSVM); 
\draw[-latex, thick] (TPMSVM) -- (varTPMSVM); 
\draw[-latex, thick] (TWSVM) -- (varTWSVM); 

\draw[semithick] (-5,3.5) -- (13,3.5);
\draw[semithick] (-5,-8) -- (13,-8);
\draw[semithick] (-5,3.5) -- (-5,-8);
\draw[semithick] (13,-8) -- (13,3.5);
\end{tikzpicture}
}
\caption{Scheme of the selected TWSVM literature review. The models are distinguished in deterministic and optimization under uncertainty approaches.
}
        \label{fig_TWSVMscheme}
\end{figure}
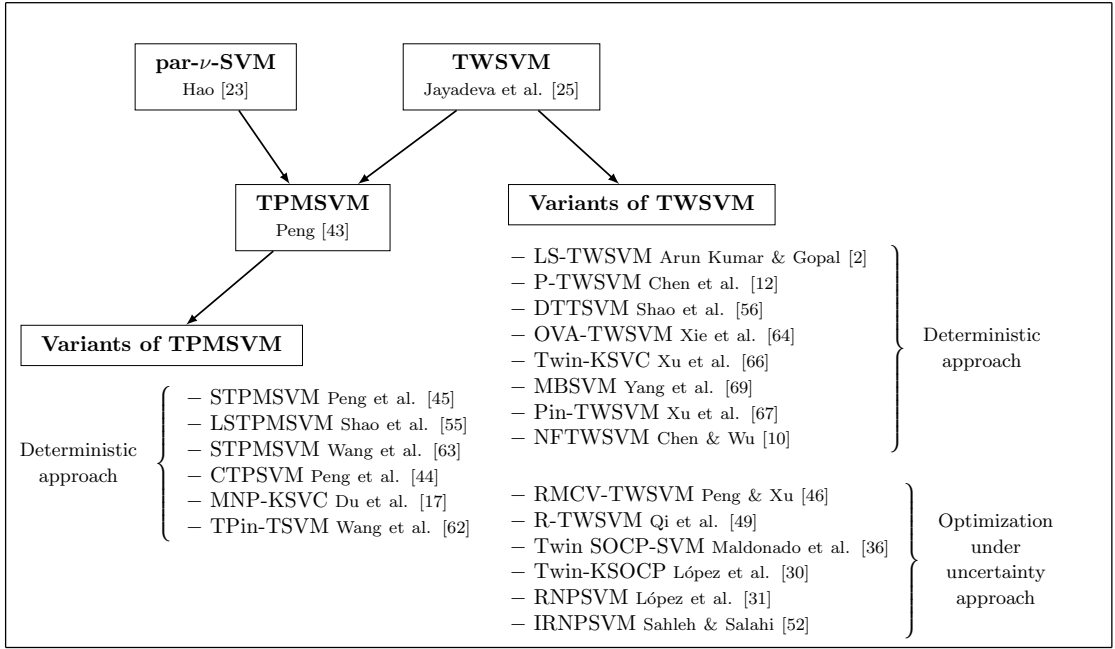

\begin{table}[h!]
\centering
\resizebox{\textwidth}{!}{
    \begin{tabular}{@{} c l *{23}c}
        & & \RotText{\small Jayadeva et al. \cite{JayKheCha2007}} & \RotText{\small Arun Kumar \& Gopal \cite{KumGop2009}} & \RotText{\small Chen et al. \cite{CheYanYeLian2011}} & \RotText{\small Peng \cite{Peng2011}} & \RotText{\small Peng and Xu \cite{PengXu2013}} & \RotText{\small Peng et al. \cite{PengWangXu2013}} & \RotText{\small Qi et al. \cite{QiTianShi2013}} & \RotText{\small Shao et al. \cite{ShaoWangChenDeng2013}} & \RotText{\small Shao et al. \cite{ShaoChenHuanYanDen2013}} & \RotText{\small Wang et al. \cite{WangShaoWu2013}} & \RotText{\small Xie et al. \cite{XieHoneXieGaoShiLiu2013}} & \RotText{\small Xu et al. \cite{XuGuoWang2013}} & \RotText{\small Yang et al. \cite{YangShaoZhang2013}} & \RotText{\small Peng et al. \cite{PengKongChen2015}} & \RotText{\small Maldonado et al. \cite{MalLopCar2016}} & \RotText{\small Maldonado et al. \cite{LopMalCar2017}} & \RotText{\small Xu et al. \cite{XuYangXianli2017}} & \RotText{\small Chen \& Wu \cite{ChenWu2018}} & \RotText{\small Lopez et al. \cite{LopMalCar2019}} & \RotText{\small Du et al. \cite{DuZhaCheSunCheShao2021}} & \RotText{\small Wang et al. \cite{WangXuZhou2021}} & \RotText{\small Sahleh \& Salahi \cite{SahSal2022}} &  \RotText{\small De Leone et al. \cite{DeLMagSpiLOD2024}}
        \\
       \toprule
\multirow{2}{*}{TWSVM} & Linear classifier & \cmark & \cmark & \cmark & \cmark & \cmark & \cmark & \cmark & \cmark & \cmark & \cmark & \cmark &  \cmark & \cmark &  \cmark &  \cmark & \cmark & \cmark & \cmark & \cmark & \cmark & \cmark & \cmark & \cmark
\\
\cmidrule{2-25}
& Nonlinear classifier & \cmark & \cmark & & \cmark & \cmark & \cmark & \cmark & \cmark & & \cmark & \cmark &  \cmark &  \cmark &  \cmark & \cmark & \cmark & \cmark & \cmark & \cmark & \cmark & \cmark & \cmark
\\
\cmidrule[0.55pt]{1-25}
\multirow{2}{*}{Classification} & Binary & \cmark & \cmark & \cmark & \cmark & \cmark & \cmark & \cmark & \cmark & & \cmark & & & &   \cmark & \cmark & & \cmark & \cmark & \cmark && \cmark & \cmark
\\
\cmidrule{2-25}
& Multiclass &&&&&&&&&  \cmark & &  \cmark &  \cmark &  \cmark &&&\cmark &&& \cmark & \cmark & & & \cmark
\\
\cmidrule[0.55pt]{1-25}
Optimization & Ellipsoidal RO & &  &  & & &   & \cmark & &&&&&&&&&&&&&&& \cmark
\\
\cmidrule{2-25}
under uncertainty & Matrix RO &  &  &  &   & \cmark & 
\\
\cmidrule{2-25}
approach & Chance-Constrained&&&&&&&&&&&&&&& \cmark &  \cmark &&& \cmark & && \cmark
\\
        \bottomrule
    \end{tabular}}
    \caption{A selected TWSVM literature review. In the first row of the table the contributions are listed in chronological order. Second and third rows specify the type of TWSVM (linear or nonlinear). Rows four and five consider binary versus multiclass classification. Finally, the optimization under uncertainty methodologies employed in the articles are explored in rows six to eight.} \label{tab_horizontalSVM_literature_review}
\end{table}

\section{Prior work} \label{sec_background}
In this section, we briefly recall the methods that are relevant for our proposal. Specifically, in Section \ref{sec_linearTPMSVM} we focus on the linear TPMSVM approach, while in Section \ref{sec_nonlinearTPMSVM} we discuss its extension to the cases of nonlinear kernel-induced decision boundaries. Both formulations are designed for addressing binary classification tasks and provide an initial framework for our novel multiclass approach.

\subsection{The binary TPMSVM for linear classification} \label{sec_linearTPMSVM}
Let $\{x^i,y_i\}_{i=1}^m$ be the set of training observations, where $x^i\in \mathbb{R}^n$ is the vector of features, and $y_i\in\{-1,+1\}$ is the label of the $i$-th data point, denoting the class to which it belongs. We assume that each of the two categories is composed by $m_{-}$ and $m_{+}$ observations, respectively, with $m_{-}+m_{+}=m$. We denote by $X_{-}\in\mathbb{R}^{n\times m_{-}}$ and $X_{+}\in\mathbb{R}^{n\times m_{+}}$ the matrices of the negative and positive samples, respectively, and $\mathcal{X}_{-}$ and $\mathcal{X}_{+}$ the corresponding indices sets. For notational convenience, let $e_{-}:=e_{m_{-}}$ and $e_{+}:=e_{m_{+}}$.

The binary TPMSVM approach for linear classification relies on two nonparallel hyperplanes $H_{+}$ and $H_{-}$, defined by the following equations:
\begin{linenomath}
\begin{equation*}\label{hyperplanes_linear_TPMSVM}
H_{+}: w_{+}^\top x +\theta_{+}=0 \qquad H_{-}: w_{-}^\top x +\theta_{-}=0.
\end{equation*}
\end{linenomath}
The normal vectors $w_{+},w_{-}\in\mathbb{R}^n$ and the intercepts $\theta_{+},\theta_{-}\in\mathbb{R}$ of $H_{+}$ and $H_{-}$ are solutions of the following pair of \emph{Quadratic Programming Problems} (QPPs):
\begin{linenomath}
\begin{equation} \label{positiveclass_linear_TPMSVM}
\begin{aligned}
\min_{w_+,\theta_+,\xi_+}  \quad & \frac{1}{2}\norm{w_+}_2^2+\frac{\nu_+}{m_{-}}e_{-}^{\top}\big(X_{-}^{\top}w_{+}+e_{-}\theta_{+}\big)+\frac{\alpha_+}{m_+}e_+^\top \xi_+ \\
\text{s.t.} \quad & X_+^\top w_{+}+e_{+} \theta_+ \geq - \xi_+ \\
 & \xi_+\geq 0,
\end{aligned}
\end{equation}
\end{linenomath}
and
\begin{linenomath}
\begin{equation} \label{negativeclass_linear_TPMSVM}
\begin{aligned}
\min_{w_-,\theta_-,\xi_-}  \quad & \frac{1}{2}\norm{w_-}_2^2-\frac{\nu_-}{m_{+}}e_{+}^{\top}\big(X_{+}^{\top}w_{-}+e_{+}\theta_{-}\big)+\frac{\alpha_-}{m_-}e_-^\top \xi_- \\
\text{s.t.} \quad & X_-^\top w_{-}+e_{-} \theta_- \leq \xi_- \\
 & \xi_-\geq 0,
\end{aligned}
\end{equation}
\end{linenomath}
where $\nu_{+},\nu_{-}> 0$, $\alpha_{+},\alpha_{-}>0$ are regularization parameters, balancing the terms in the objective functions. The vectors $\xi_{+}\in\mathbb{R}^{m_{+}}$, $\xi_{-}\in\mathbb{R}^{m_{-}}$ are slack vectors, associated with misclassified samples in each class (see \cite{CorVap1995}). As in the $\nu$-SVC and par-$\nu$-SVM (see \cite{SchSmoWilBar2000,Hao2010}), the ratios $\nu_{+}/\alpha_{+}$ and $\nu_{-}/\alpha_{-}$ control the fractions of supporting vectors and margin errors of each class.

The objective function of model \eqref{positiveclass_linear_TPMSVM} consists of three parts. The first term is related to the margin of the positive class, measured with respect to the $\ell_2$-norm. The second term considers the projections of negative training points on $H_{+}$, ensuring that these observations are as far as possible from $H_{+}$. Finally, the third term is the empirical risk, accounting for the total number of misclassified positive samples. Similar considerations hold for the objective function of model \eqref{negativeclass_linear_TPMSVM}.

The dual problems of models \eqref{positiveclass_linear_TPMSVM} and \eqref{negativeclass_linear_TPMSVM} are the following QPPs, respectively:
\begin{linenomath}
\begin{equation} \label{dual_positiveclass_linear_TPMSVM}
\begin{aligned}
\max_{\lambda_{+}}  \quad & -\frac{1}{2}\lambda_{+}^\top X_{+}^\top X_{+}\lambda_{+}+\frac{\nu_{+}}{m_{-}}e_{-}^\top X_{-}^\top X_{+}\lambda_{+} \\
\text{s.t.} \quad & e_{+}^\top \lambda_{+}=\nu_{+} \\
 & 0\leq \lambda_{+}\leq \frac{\alpha_{+}}{m_{+}},
\end{aligned}
\end{equation}
\end{linenomath}
and
\begin{linenomath}
\begin{equation} \label{dual_negativeeclass_linear_TPMSVM}
\begin{aligned}
\max_{\lambda_{-}}  \quad & -\frac{1}{2}\lambda_{-}^\top X_{-}^\top X_{-}\lambda_{-}+\frac{\nu_{-}}{m_{+}}e_{+}^\top X_{+}^\top X_{-}\lambda_{-} \\
\text{s.t.} \quad & e_{-}^\top \lambda_{-}=\nu_{-} \\
 & 0\leq \lambda_{-}\leq \frac{\alpha_{-}}{m_{-}},
\end{aligned}
\end{equation}
\end{linenomath}
where $\lambda_{+}\in\mathbb{R}^{m_+}$ and $\lambda_{-}\in\mathbb{R}^{m_-}$ are the Lagrangian multiplier vectors for each class. Once \eqref{dual_positiveclass_linear_TPMSVM} and \eqref{dual_negativeeclass_linear_TPMSVM} are solved, the optimal parameters $(w_{+},\theta_{+})$ and $(w_{-},\theta_{-})$ are computed through the \emph{Karush-Kuhn-Tucker} (KKT) conditions as follows:
\begin{linenomath}
\begin{equation} \label{optimal_linear_w+_b+}
w_{+} = X_{+}\lambda_{+}-\frac{\nu_{+}}{m_{-}}X_{-}e_{-} \qquad \theta_{+}=-\frac{1}{\lvert{\mathcal{N}_{+}}\rvert}\sum_{i\in\mathcal{N}_{+}}{x^i}^{\top} w_+,
\end{equation}
\end{linenomath}
and
\begin{linenomath}
\begin{equation} \label{optimal_linear_w-_b-}
w_{-} = \frac{\nu_{-}}{m_{+}}X_{+}e_{+} -X_{-}\lambda_{-} \qquad \theta_{-}=-\frac{1}{\lvert{\mathcal{N}_{-}}\rvert}\sum_{i\in\mathcal{N}_{-}}{x^i}{^\top} w_-,
\end{equation}
\end{linenomath}
with $\mathcal{N}_{+}$ the index set of positive training observations $x^i$, whose corresponding Lagrangian multiplier $\lambda_{+,i}$ satisfies $0<\lambda_{+,i}<\alpha_+/m_+$. Similarly for $\mathcal{N}_{-}$.

After the computation of $w_{+},w_{-},\theta_{+},\theta_{-}$, it is possible to classify each new observation $x\in\mathbb{R}^n$ as negative or positive according to the following decision function:
\begin{linenomath}
\begin{equation*}\label{linear_binary_classifier}
f_{\text{lin}}(x):=\sign\bigg(\frac{w_{+}^\top x + \theta_{+}}{\norm{w_{+}}_2}+\frac{w_{-}^\top x + \theta_{-}}{\norm{w_{-}}_2}\bigg).
\end{equation*}
\end{linenomath}
By way of illustration, in Figure \ref{fig_linear_binary_TPMSVM} we depict the hyperplanes $H_-$ and $H_+$, together with the final classifier $f_{\text{lin}}=0$, obtained by applying the linear TPMSVM to a bidimensional toy example. Misclassified points are represented as stars.

\begin{figure}[h!]
     \centering
     \begin{subfigure}[b]{0.496\textwidth}
         \centering
         \includegraphics[width=\textwidth]{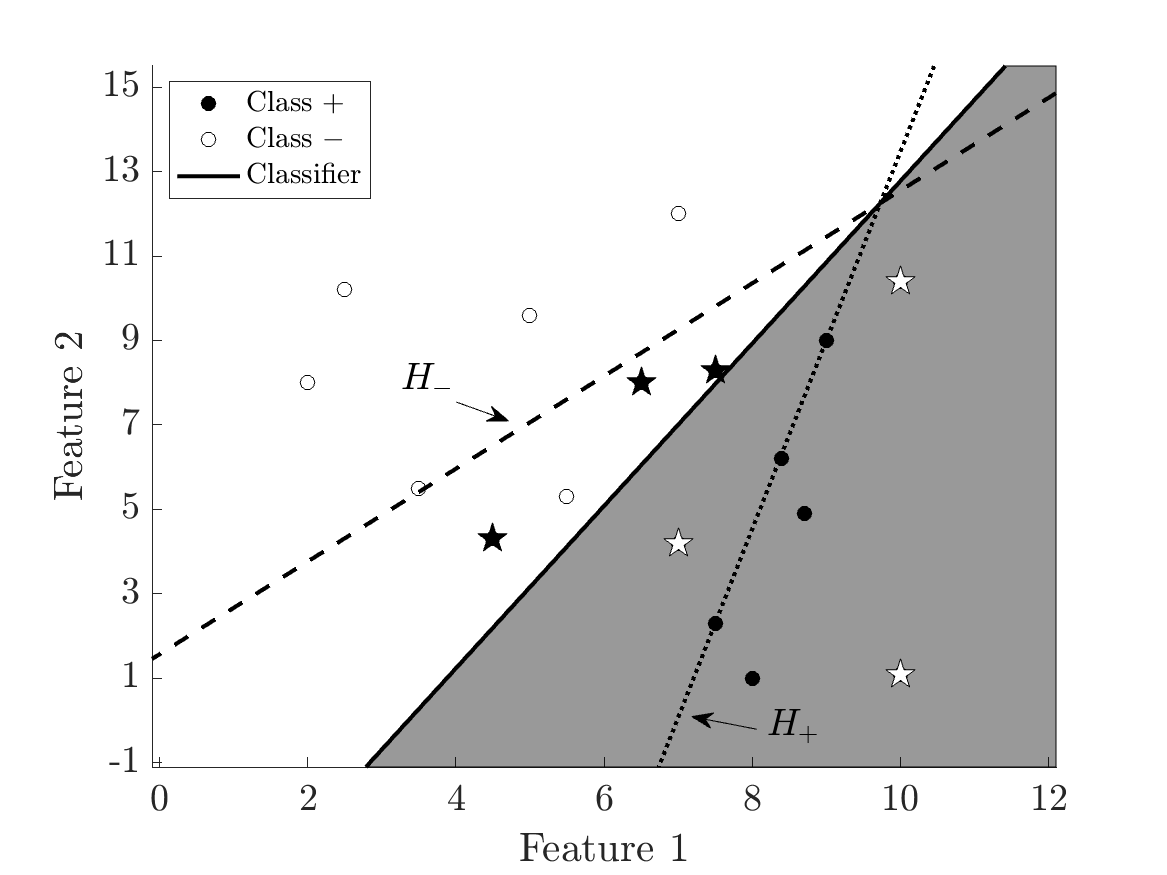}
         \caption{Linear binary TPMSVM}
         \label{fig_linear_binary_TPMSVM}
     \end{subfigure}
     \begin{subfigure}[b]{0.496\textwidth}
         \centering
         \includegraphics[width=\textwidth]{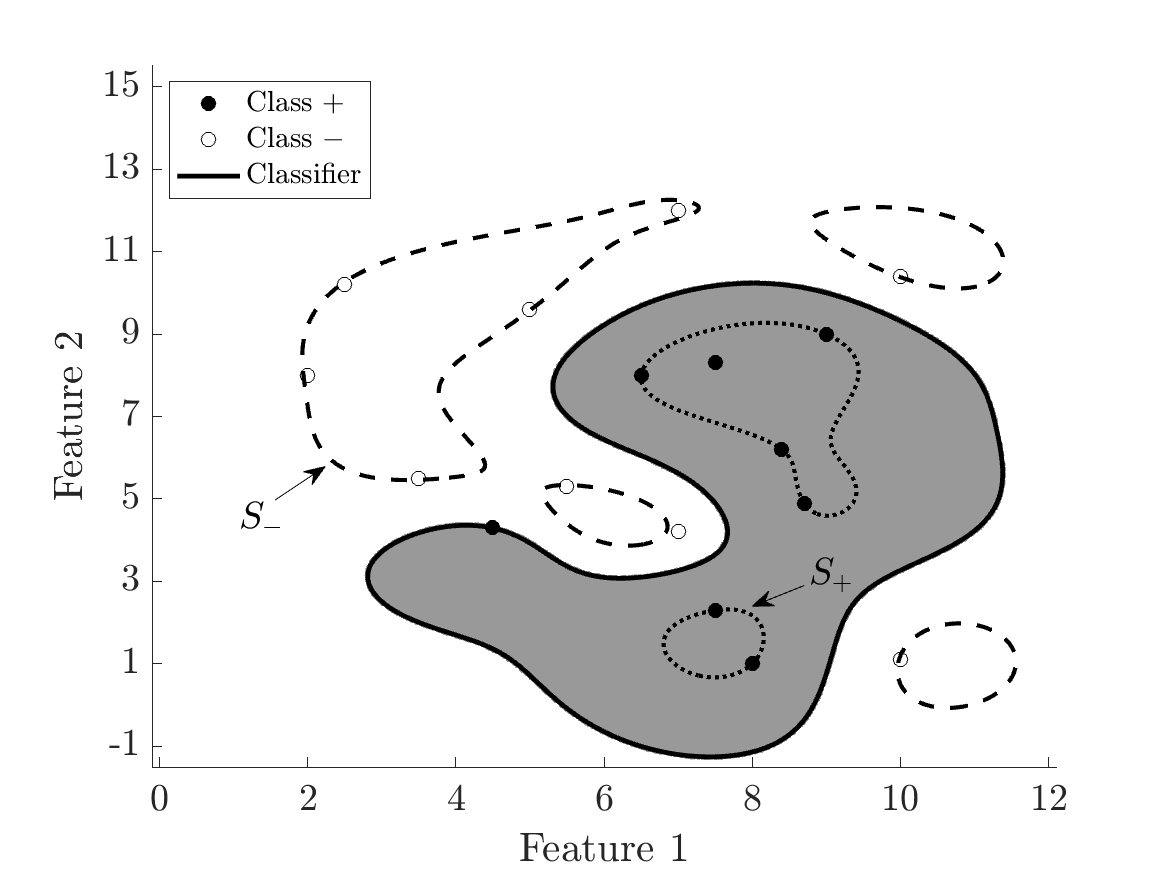}
         \caption{Nonlinear binary TPMSVM}
         \label{fig_nonlinear_binary_TPMSVM}
     \end{subfigure}
          \caption{Linear and nonlinear classifiers for the case of binary TPMSVM. The hyperparameters are $\nu_{+}=\nu_{-}=0.5$ and $\alpha_{+}=\alpha_{-}=1$. In the nonlinear case (panel on the right), the Gaussian kernel with $\sigma=1.5$ is considered. Misclassified points for each class are represented as stars.}
        \label{fig_binary_TPMSVM}
\end{figure}

\subsection{The binary TPMSVM for nonlinear classification} \label{sec_nonlinearTPMSVM}
To increase the predictive power of the model, in \cite{Peng2011} the nonlinear kernel-induced version of the TPMSVM approach is provided.

According to the classical procedure introduced in \cite{CorVap1995}, the training input data are first mapped into an inner product space ($\mathcal{H}$, $\langle \cdot,\cdot \rangle_{\mathcal{H}}$) via a map $\varphi:\mathbb{R}^n\to \mathcal{H}$. The separating hyperplanes are then defined in the feature space $\mathcal{H}$ according to the following expressions:
\begin{linenomath}
\begin{equation*}\label{hyperplanes_nonlinear_TPMSVM}
\widetilde{H}_{+}: \langle \widetilde{w}_{+}, \varphi(x) \rangle_{\mathcal{H}} +{\widetilde{\theta}_{+}}=0 \qquad \widetilde{H}_{-}: \langle \widetilde{w}_{-}, \varphi(x) \rangle_{\mathcal{H}} +{\widetilde{\theta}_{-}}=0,
\end{equation*}
\end{linenomath}
with $\widetilde{w}_{+},\widetilde{w}_{-}\in\mathcal{H}$ and ${\widetilde{\theta}_{+}},{\widetilde{\theta}_{-}}\in\mathbb{R}$.

Models \eqref{positiveclass_linear_TPMSVM} and \eqref{negativeclass_linear_TPMSVM} are modified accordingly as:
\begin{linenomath}
\begin{equation} \label{positiveclass_nonlinear_TPMSVM}
\begin{aligned}
\min_{\widetilde{w}_+,{\widetilde{\theta}_{+}},\xi_+}  \quad & \frac{1}{2}\norm{\widetilde{w}_{+}}^2_{\mathcal{H}} + \frac{\nu_+}{m_{-}}\sum_{i\in\mathcal{X}_{-}}\big(\langle \widetilde{w}_+,\varphi(x^i)\rangle_{\mathcal{H}}+{\widetilde{\theta}_{+}}\big)+\frac{\alpha_+}{m_+}e_+^\top \xi_+ \\
\text{s.t.} \quad & \langle \widetilde{w}_+,\varphi(x^i)\rangle_{\mathcal{H}} +{\widetilde{\theta}_{+}} \geq - \xi_{+,i} \qquad i \in\mathcal{X_+} \\
 & \xi_+\geq 0,
\end{aligned}
\end{equation}
\end{linenomath}
and
\begin{linenomath}
\begin{equation} \label{negativeclass_nonlinear_TPMSVM}
\begin{aligned}
\min_{\widetilde{w}_-,{\widetilde{\theta}_{-}},\xi_-}  \quad & \frac{1}{2}\norm{\widetilde{w}_{-}}^2_{\mathcal{H}} - \frac{\nu_-}{m_{+}}\sum_{i\in\mathcal{X}_{+}}\big(\langle \widetilde{w}_-,\varphi(x^i)\rangle_{\mathcal{H}}+{\widetilde{\theta}_{-}}\big)+\frac{\alpha_-}{m_-}e_-^\top \xi_- \\
\text{s.t.} \quad & \langle \widetilde{w}_-,\varphi(x^i)\rangle_{\mathcal{H}} +{\widetilde{\theta}_{-}} \leq  \xi_{-,i} \qquad i \in\mathcal{X_-} \\
 & \xi_-\geq 0,
\end{aligned}
\end{equation}
\end{linenomath}
where $\norm{\cdot}_\mathcal{H}$ is the norm induced by the dot product $\langle \cdot,\cdot \rangle_{\mathcal{H}}$, i.e. $\norm{z}_\mathcal{H}:=\sqrt{\langle z, z \rangle_{\mathcal{H}}}$, with $z\in\mathcal{H}$.

Unfortunately, since a closed-form expression of the feature map $\varphi(\cdot)$ is rarely available and the feature space $\mathcal{H}$ is potentially an infinite-dimensional space (see \cite{PicSci2018,SchSmo2001}), models \eqref{positiveclass_nonlinear_TPMSVM}-\eqref{negativeclass_nonlinear_TPMSVM} cannot be solved in practice (see \cite{Jim-CorMorPin2021}). To overcome this limitation, it is possible to reformulate their dual problems and efficiently solve them through the so-called \textit{kernel trick} (see \cite{CorVap1995}). Specifically, a symmetric and positive semidefinite kernel function $k:\mathbb{R}^n\times \mathbb{R}^n \to \mathbb{R}$ is introduced such that $k(x,x'):=\langle\varphi(x),\varphi(x')\rangle_{\mathcal{H}}$, for all $x,x'\in\mathbb{R}^n$. Examples of kernel functions typically used in the ML literature are reported in Table \ref{tab_kernels}. The reader is referred to \cite{SchSmo2001} for a comprehensive overview on kernel functions.

\begin{table}[h!]
\centering
\resizebox{0.8\textwidth}{!}{
\begin{tabular}{lll}\toprule
Kernel function & $k(x,x')$ & Parameter \\ \hline
Homogeneous polynomial & $k(x,x')= (x^{\top}x')^d$ & $d\in \mathbb{N}$\\
Inhomogeneous polynomial & $k(x,x')=(\gamma+x^{\top}x')^d$ & $\gamma \geq 0$, $d\in \mathbb{N}$\\
Gaussian & $k(x,x')= \exp\bigg(\displaystyle -\frac{\norm{x-x'}_2^2}{2 \sigma^2}\bigg)$ & $\sigma >0$\\
\bottomrule
\end{tabular}}
\caption{Examples of kernel functions. The first column reports the name of the functions. The second column provides their mathematical expressions. Finally, the third column contains their relevant parameters.} \label{tab_kernels}
\end{table}

Thus, the dual problems of models \eqref{positiveclass_nonlinear_TPMSVM} and \eqref{negativeclass_nonlinear_TPMSVM} can be reformulated in terms of the kernel function as follows:
\begin{linenomath}
\begin{equation} \label{dual_positiveclass_nonlinear_TPMSVM}
\begin{aligned}
\max_{\lambda_{+}}  \quad & -\frac{1}{2}\lambda_{+}^\top K(X_+,X_+)\lambda_{+}+\frac{\nu_{+}}{m_{-}}e_{-}^\top K(X_-,X_+)\lambda_{+} \\
\text{s.t.} \quad & e_{+}^\top \lambda_{+}=\nu_{+} \\
 & 0\leq \lambda_{+}\leq \frac{\alpha_{+}}{m_{+}},
\end{aligned}
\end{equation}
\end{linenomath}
and
\begin{linenomath}
\begin{equation} \label{dual_negativeeclass_nonlinear_TPMSVM}
\begin{aligned}
\max_{\lambda_{-}}  \quad & -\frac{1}{2}\lambda_{-}^\top K(X_-,X_-)\lambda_{-}+\frac{\nu_{-}}{m_{+}}e_{+}^\top K(X_+,X_-)\lambda_{-} \\
\text{s.t.} \quad & e_{-}^\top \lambda_{-}=\nu_{-} \\
 & 0\leq \lambda_{-}\leq \frac{\alpha_{-}}{m_{-}},
\end{aligned}
\end{equation}
\end{linenomath}
where $K(X_+,X_+)$ is the Gram matrix of the dot products $k(x^i,x^j)$ for $i,j\in\mathcal{X_{+}}$. Similarly with $K(X_+,X_-)$, $K(X_-,X_+)$ and $K(X_-,X_-)$.

As in the linear case, once problems \eqref{dual_positiveclass_nonlinear_TPMSVM} and \eqref{dual_negativeeclass_nonlinear_TPMSVM} are solved, the KKT conditions provide $(\widetilde{w}_+,{\widetilde{\theta}_{+}})$ and $(\widetilde{w}_-,{\widetilde{\theta}_{-}})$, defining the hyperplanes $\widetilde{H}_{+}$ and $\widetilde{H}_{-}$ in the feature space $\mathcal{H}$. These hyperplanes, in turn, induce nonlinear decision boundaries $S_{+}$ and $S_{-}$ in the input space $\mathbb{R}^n$.

Finally, the decision function in the case of binary TPMSVM with nonlinear classifiers is given as follows:
\begin{linenomath}
\begin{equation*}\label{nonlinear_binary_classifier}
f_{\text{nonlin}}(x):=\sign\bigg(\frac{\langle \widetilde{w}_+,\varphi(x) \rangle_{\mathcal{H}} + {\widetilde{\theta}_{+}}}{\norm{\widetilde{w}_+}_{\mathcal{H}}}+\frac{\langle \widetilde{w}_-,\varphi(x) \rangle_{\mathcal{H}} + {\widetilde{\theta}_{-}}}{\norm{\widetilde{w}_+}_{\mathcal{H}}}\bigg).
\end{equation*}
\end{linenomath}

Figure \ref{fig_nonlinear_binary_TPMSVM} shows the separating hypersurfaces $S_+$ and $S_-$ induced by a Gaussian kernel when classifying the same bidimensional toy example of Figure \ref{fig_linear_binary_TPMSVM}.

\section{A novel multiclass TPMSVM-type model} \label{sec_multiclass_deterministic}
In this section, we extend the binary TPMSVM approach recalled so far to address multiclass classification problems, considering both linear (Section \ref{sec_linear_multiclass}) and nonlinear (Section \ref{sec_nonlinear_multiclass}) decision boundaries. Among all the possible multiclass strategies, we opt for the one-versus-all formulation due to its lower computational complexity and strong performance in terms of accuracy (see \cite{DingZhaZhaZhaXue2019}).

In the following, we assume that $C$ represents the total number of classes and, for each class $c=1,\ldots,C$, the subscript $\cdot_{-c}$ denotes the set of points not belonging to class $c$.

\subsection{The multiclass TPMSVM for linear classification} \label{sec_linear_multiclass}
Let $\{x^i,y_i\}_{i=1}^m$ be the set of training samples, with $y_i\in\{1,\ldots,C\}$. For each class $c$, with $c=1,\ldots,C$, we denote by $m_c$ the number of observations belonging to class $c$ and $m_{-c}:=m-{m_c}$. Matrix $X_c\in\mathbb{R}^{n\times m_c}$ comprises all training data points of class $c$, and correspondingly, matrix $X_{-c}\in\mathbb{R}^{n\times m_{-c}}$ includes the data for all the other classes. The associated indices sets are $\mathcal{X}_c$ and $\mathcal{X}_{-c}$, respectively, and $\mathcal{X}:=\mathcal{X}_{c}\cup \mathcal{X}_{-c}$. Let $e_c:=e_{m_{c}}$ and $e_{-c}:=e_{m_{-c}}$.

According to the one-versus-all strategy, for each class $c=1,\ldots,C$ we aim to find the best separating hyperplane $H_c$ defined by equation $w_c^\top x + \theta_c=0$, where $w_c \in\mathbb{R}^n$ and $\theta_c \in \mathbb{R}$ are solutions of the following QPP:
\begin{linenomath}
\begin{equation} \label{multiclass_linear_TPMSVM}
\begin{aligned}
\min_{w_c,\theta_c,\xi_c}  \quad & \frac{1}{2}\norm{w_c}_2^2+\frac{\nu_c}{m_{-c}}e_{-c}^{\top}\big(X_{-c
}^{\top}w_c+e_{-c}\theta_c\big)+\frac{\alpha_c}{m_c}e_c^\top \xi_c \\
\text{s.t.} \quad & X_c^\top w_c+e_c \theta_c \geq - \xi_c \\
 & \xi_c\geq 0.
\end{aligned}
\end{equation}
\end{linenomath}
Parameters $\nu_c>0$, $\alpha_c>0$ and slack vector $\xi_c\in\mathbb{R}^{m_c}$ have an equivalent interpretation of the corresponding terms in model \eqref{positiveclass_linear_TPMSVM}.

By introducing the Lagrangian function of problem \eqref{multiclass_linear_TPMSVM}, the KKT conditions lead to closed-form expressions for the normal vector $w_c$ and the intercept $\theta_c$ of the hyperplane $H_c$, for all $c=1,\ldots,C$, following a similar approach to the one used to derive \eqref{optimal_linear_w+_b+}.

Once all the $C$ hyperplanes have been determined, we propose two possible alternatives for the decision function:
\begin{linenomath} 
\begin{equation} \label{dec_func_multiclass_linear_argmin}
f_{\text{lin},\min}(x):=\argmin_{c=1,\ldots,C} \frac{\abs{w_c^\top x +\theta_c}}{\norm{w_c}_2},
\end{equation}
\end{linenomath}
and
\begin{linenomath} 
\begin{equation} \label{dec_func_multiclass_linear_argmax}
f_{\text{lin},\max}(x):=\argmax_{c=1,\ldots,C} \frac{w_c^\top x +\theta_c}{\norm{w_c}_2}.
\end{equation}
\end{linenomath}

In Figures \ref{fig_linear_multiclass_TPMSVM_argmin}-\ref{fig_linear_multiclass_TPMSVM_argmax} we represent the results of the proposed linear methodology in the case of a multiclass classification task. We consider a bidimensional toy example with three classes (black, grey, and white points, respectively). The hyperplanes $H_1$, $H_2$, $H_3$ are depicted, along with the decision functions computed according to formulas \eqref{dec_func_multiclass_linear_argmin} and \eqref{dec_func_multiclass_linear_argmax}.

\begin{figure}[h!]
     \centering
     \begin{subfigure}[b]{0.496\textwidth}
         \centering
         \includegraphics[width=\textwidth]{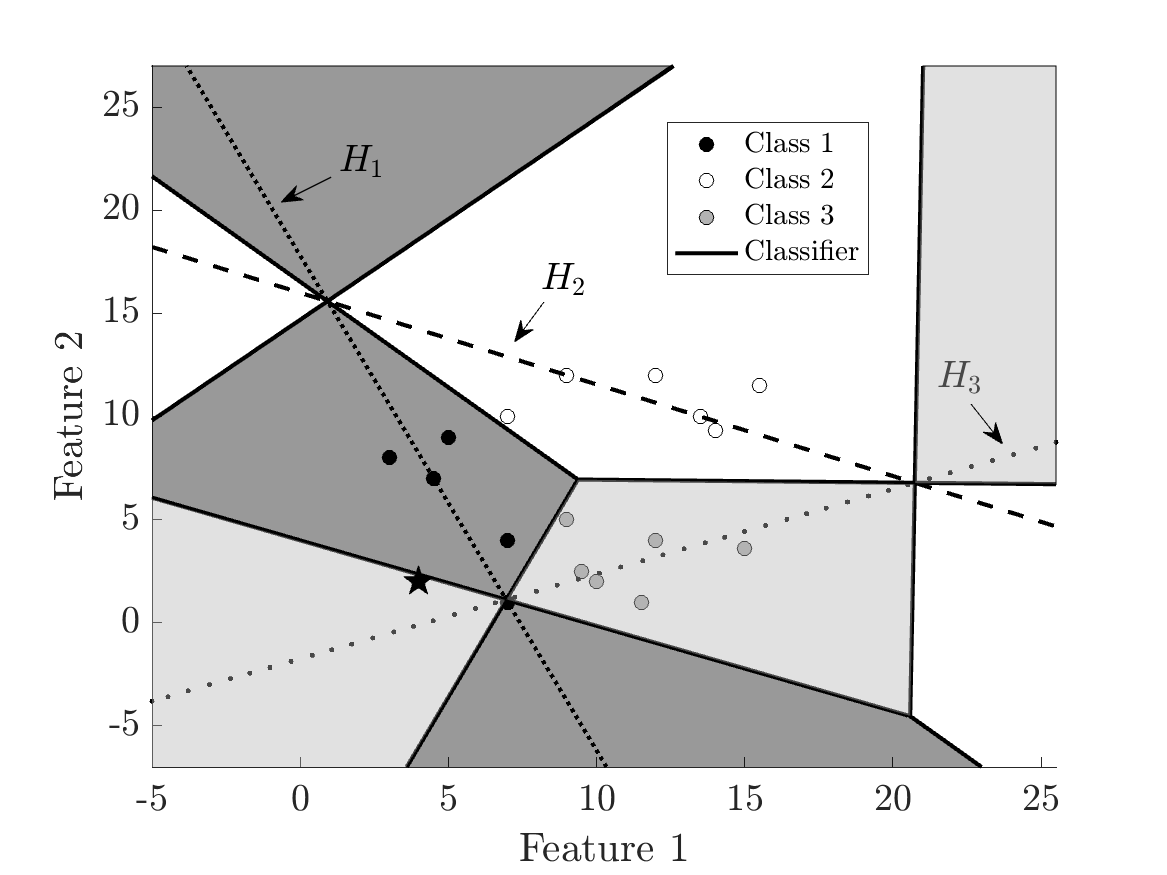}
         \caption{Linear multiclass TPMSVM with argmin formula \eqref{dec_func_multiclass_linear_argmin}}
         \label{fig_linear_multiclass_TPMSVM_argmin}
     \end{subfigure}
     \begin{subfigure}[b]{0.496\textwidth}
         \centering
         \includegraphics[width=\textwidth]{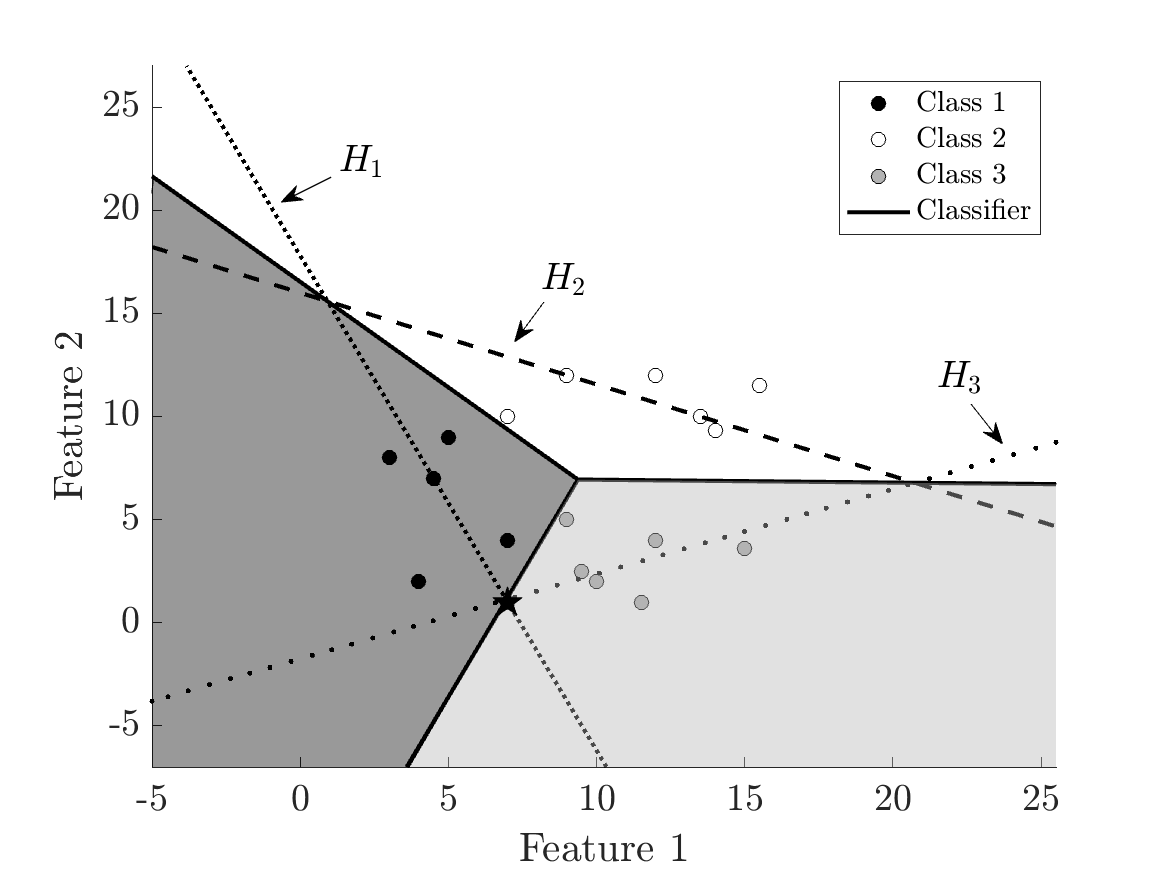}
         \caption{Linear multiclass TPMSVM with argmax formula \eqref{dec_func_multiclass_linear_argmax}}
         \label{fig_linear_multiclass_TPMSVM_argmax}
     \end{subfigure}
     \par\bigskip
          \begin{subfigure}[b]{0.496\textwidth}
         \centering
         \includegraphics[width=\textwidth]{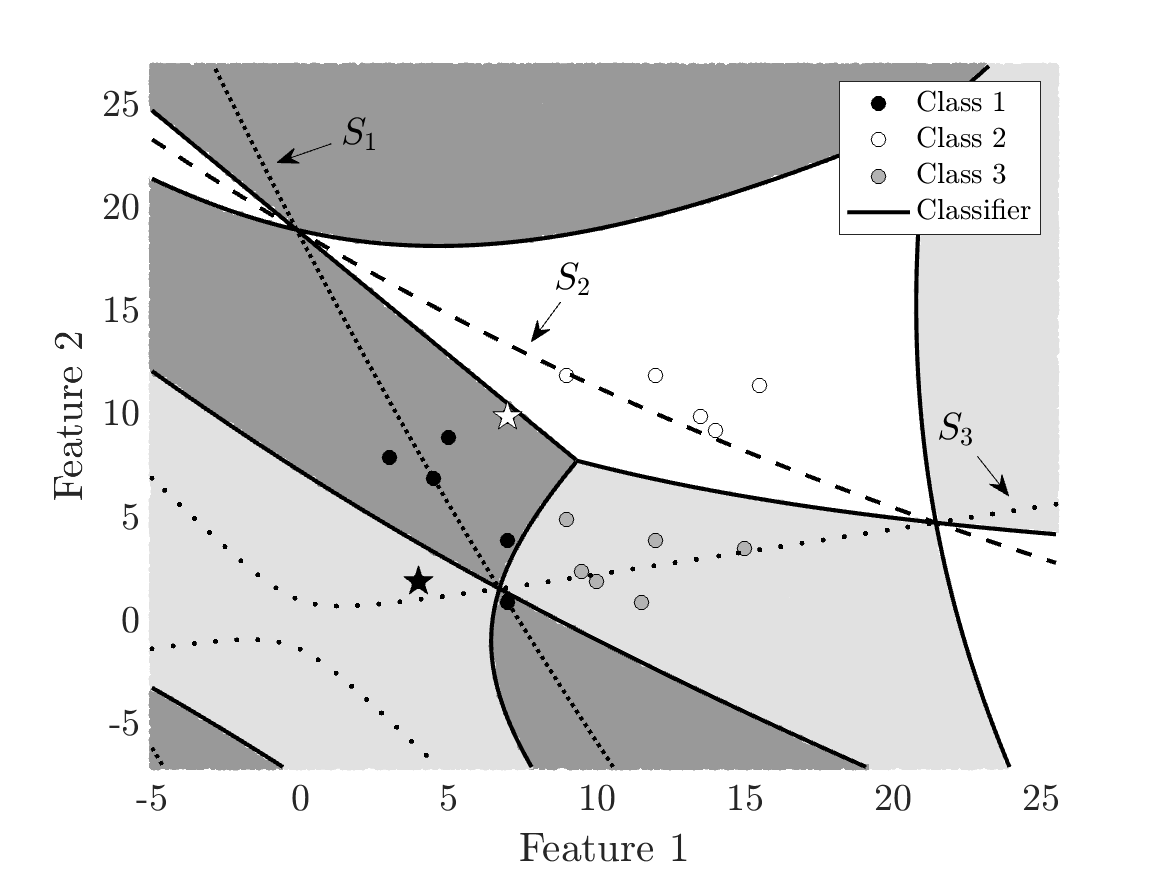}
         \caption{Nonlinear multiclass TPMSVM with argmin formula \eqref{dec_func_multiclass_nonlinear_argmin}}
         \label{fig_nonlinear_multiclass_TPMSVM_argmin}
     \end{subfigure}
     \begin{subfigure}[b]{0.496\textwidth}
         \centering
         \includegraphics[width=\textwidth]{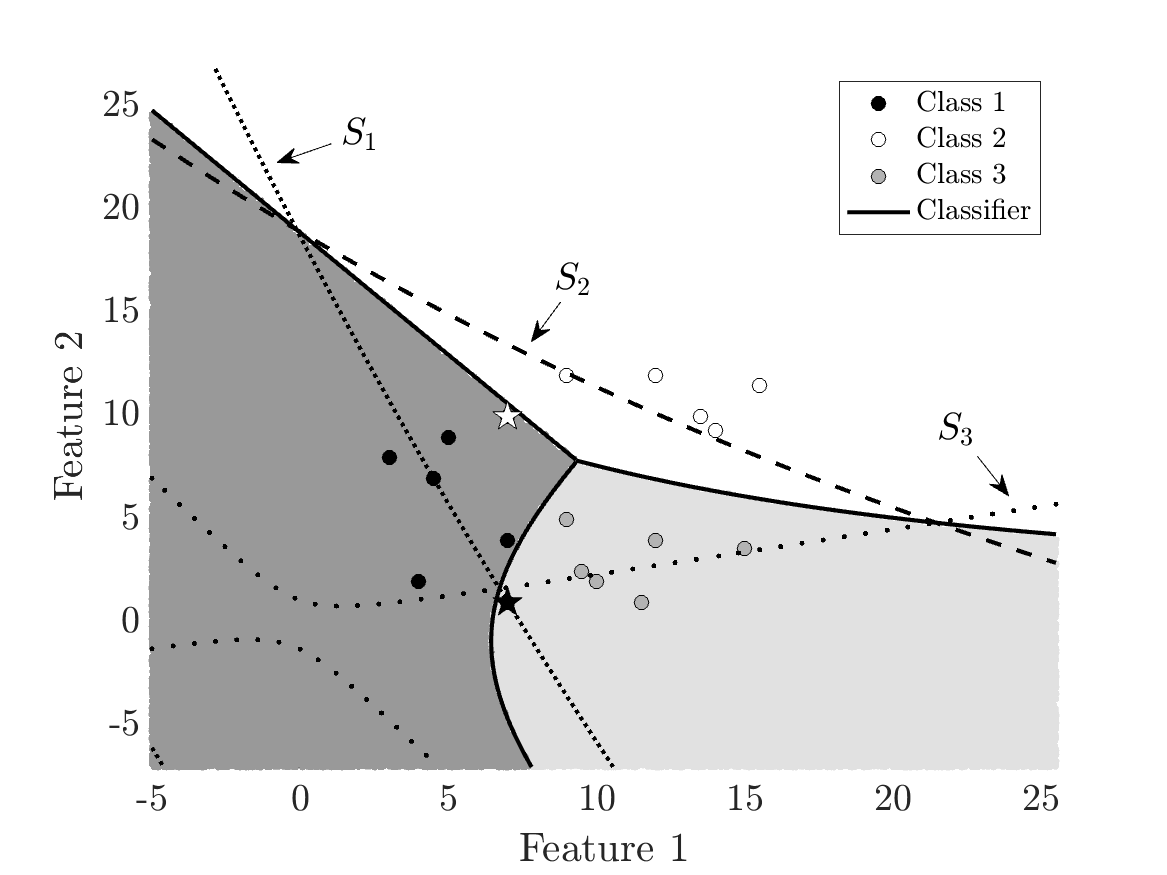}
         \caption{Nonlinear multiclass TPMSVM with argmax formula \eqref{dec_func_multiclass_nonlinear_argmax}}
         \label{fig_nonlinear_multiclass_TPMSVM_argmax}
     \end{subfigure}
          \caption{Linear and nonlinear classifiers for the case of three-classes TPMSVM. The hyperparameters are $\nu_{c}=0.5$ and $\alpha_{c}=1$ for $c=1,2,3$. In the nonlinear case (panels (c) and (d)), the inhomogeneous polynomial kernel with $d=2$ and $\gamma=1.5$ is considered. Misclassified points for each class are represented as stars.}
        \label{fig_multiclass_TPMSVM}
\end{figure}

\subsection{The multiclass TPMSVM for nonlinear classification} \label{sec_nonlinear_multiclass}
When dealing with nonlinear classifiers, model \eqref{multiclass_linear_TPMSVM} is defined in the feature space $\mathcal{H}$, leading to the following formulation:
\begin{linenomath}
\begin{equation} \label{multiclass_nonlinear_TPMSVM}
\begin{aligned}
\min_{\widetilde{w}_c,{\widetilde{\theta}_c},\xi_c}  \quad & \frac{1}{2}\norm{\widetilde{w}_c}_{\mathcal{H}}^2+\frac{\nu_c}{m_{-c}} \sum_{i\in\mathcal{X}_{-c}} (\langle \widetilde{w}_c, \varphi(x^i)\rangle_{\mathcal{H}} + {\widetilde{\theta}_c} )+\frac{\alpha_c}{m_c}\sum_{i \in \mathcal{X}_{c}} \xi_{c,i} \\
\text{s.t.} \quad & \langle \widetilde{w}_c, \varphi (x^i) \rangle_{\mathcal{H}} + {\widetilde{\theta}_c} \geq - \xi_{c,i} \qquad i \in \mathcal{X}_c \\
 & \xi_{c,i}\geq 0 \qquad i \in \mathcal{X}_c.
\end{aligned}
\end{equation}
\end{linenomath}

As discussed in Section \ref{sec_nonlinearTPMSVM}, the KKT conditions provide the optimal solutions $(\widetilde{w}_c,{\widetilde{\theta}_c})$ in terms of the Lagrangian multiplier $\lambda_c\in\mathbb{R}^{m_c}$ as follows:
\begin{linenomath}
\begin{equation} \label{optimal_wc_bc_nonlinear}
\widetilde{w}_{c} = \sum_{i\in\mathcal{X}_{c}} \lambda_{c,i} \varphi(x^i)-\frac{\nu_{c}}{m_{-c}}\sum_{i\in\mathcal{X}_{-c}} \varphi(x^i) \qquad {\widetilde{\theta}_c}=-\frac{1}{\lvert{\mathcal{N}_{c}}\rvert}\sum_{i\in\mathcal{N}_{c}} \langle \varphi(x^i),\widetilde{w}_c \rangle_{\mathcal{H}},
\end{equation}
\end{linenomath}
where $\mathcal{N}_{c}$ is the index set of observations $x^i$, with $i\in\mathcal{X}_c$, whose corresponding Lagrangian multiplier satisfies $0<\lambda_{c,i}<\alpha_c/m_c$.

Within the nonlinear context, we propose as decision functions the following:
\begin{linenomath} 
\begin{equation} \label{dec_func_multiclass_nonlinear_argmin}
f_{\text{nonlin},\min}(x):=\argmin_{c=1,\ldots,C} \frac{\abs{\langle \widetilde{w}_c, \varphi(x) \rangle_{\mathcal{H}} +{\widetilde{\theta}_c}}}{\norm{\widetilde{w}_c}_{\mathcal{H}}},
\end{equation} 
\end{linenomath}
and
\begin{linenomath} 
\begin{equation} \label{dec_func_multiclass_nonlinear_argmax}
f_{\text{nonlin},\max}(x):=\argmax_{c=1,\ldots,C} \frac{{\langle \widetilde{w}_c, \varphi(x) \rangle_{\mathcal{H}} +{\widetilde{\theta}_c}}}{\norm{\widetilde{w}_c}_{\mathcal{H}}}.
\end{equation}
\end{linenomath}
For all $c=1,\ldots,C$, each term in \eqref{dec_func_multiclass_nonlinear_argmin}-\eqref{dec_func_multiclass_nonlinear_argmax} is computed using the kernel trick, resulting in:
\begin{linenomath}
\begin{equation*}
\langle \widetilde{w}_c, \varphi(x) \rangle_{\mathcal{H}} = \lambda_c^\top K(X_c,x)-\frac{\nu_c}{m_{-c}}e_{-c}^\top K(X_{-c},x),
\end{equation*}
\end{linenomath}
\begin{linenomath}
\begin{equation*}
\widetilde{\theta}_c = -\frac{1}{\lvert{\mathcal{N}_{c}}\rvert}\sum_{i\in\mathcal{N}_{c}} \big[K(x^i,X_c)\lambda_c-\frac{\nu_c}{m_{-c}}K(x^i,X_{-c})e_{-c}\big],
\end{equation*}
\end{linenomath}
and
\begin{linenomath}
\begin{equation*}
\begin{split}
\norm{\widetilde{w}_c}^2_{\mathcal{H}} = \langle {\widetilde{w}_c}, {\widetilde{w}_c} \rangle_{\mathcal{H}} &= \lambda_c^{\top}K(X_c,X_c)\lambda_c-\frac{\nu_c}{m_{-c}}\lambda_c^\top K(X_c,X_{-c})e_{-c}+\\
								&-\frac{\nu_c}{m_{-c}} e_{-c}^\top K(X_{-c},X_{c})\lambda_c
								+\frac{\nu_c^2}{m_{-c}^2}e_{-c}^\top K(X_{-c},X_{-c})e_{-c}.
\end{split}
\end{equation*}
\end{linenomath}

The computational complexity of the binary TPMSVM is approximately $\mathcal{O}(m^3/4)$, as it involves solving two QPPs, each of size $m/2$ (see \cite{Peng2011}). In the proposed multiclass extension, one QPP is solved for each of the $C$ classes, both in the linear and kernelized settings. As a result, the overall computational complexity of our multiclass TPMSVM approach is $C \cdot \mathcal{O}(m^3/8)$.

In Figures \ref{fig_nonlinear_multiclass_TPMSVM_argmin}-\ref{fig_nonlinear_multiclass_TPMSVM_argmax} we illustrate the results of model \eqref{multiclass_nonlinear_TPMSVM}, considering the same bidimensional toy dataset of Figures \ref{fig_linear_multiclass_TPMSVM_argmin}-\ref{fig_linear_multiclass_TPMSVM_argmax}. The hypersurfaces $S_1,S_2,S_3$ are induced by an inhomogeneous quadratic kernel, and the decision functions are computed according to formulas \eqref{dec_func_multiclass_nonlinear_argmin} and \eqref{dec_func_multiclass_nonlinear_argmax}.

\section{The robust multiclass TPMSVM-type model} \label{sec_multiclass_robust}
In this section, we derive the robust counterpart formulations of models \eqref{multiclass_linear_TPMSVM} and \eqref{multiclass_nonlinear_TPMSVM}. Specifically, we start by constructing bounded-by-$\ell_p$-norm uncertainty sets around training data points. The deterministic approaches presented in Section \ref{sec_multiclass_deterministic} are then robustified by optimizing over the worst-case realization of the uncertain data across the entire uncertainty sets. Tractable SOCP reformulations are finally provided.

Section \ref{sec_rob_lin_multiclass} focuses on the robust extension of multiclass TPMSVM for linear classification, while Section \ref{sec_rob_nonlin_multiclass} explores cases with kernel-induced decision boundaries.

\subsection{The robust TPMSVM for linear multiclass classification} \label{sec_rob_lin_multiclass}
As in \cite{MagSpi2023}, in  the construction of the uncertainty set we assume that each training observation $x^i\in\mathbb{R}^n$ is subject to an unknown bounded-by-$\ell_p$-norm perturbation $\delta^i\in\mathbb{R}^n$, with $p\in[1,\infty]$. Therefore, the uncertainty set around $x^i$ can be written as:
\begin{linenomath}
\begin{equation} \label{uncertainty_set_input_space}
\mathcal{U}_p(x^i):=\big\{x\in\mathbb{R}^n \lvert x=x^i+\delta^i, \norm{\delta^i}_p\leq \varepsilon_i\big\}.
\end{equation}
\end{linenomath}
The radius $\varepsilon_i\geq 0$ controls the degree of conservatism: when $\varepsilon_i=0$, the uncertainty set $\mathcal{U}_p(x^i)$ reduces to its center $x^i$.

Robustifying model \eqref{multiclass_linear_TPMSVM} against the uncertainty set $\mathcal{U}_p(x^i)$ yields the following optimization model:
\begin{linenomath}
\begin{equation} \label{robust_linear_intractable}
\begin{aligned}
\min_{w_c,\theta_c,\xi_c}  \quad & \frac{1}{2}\norm{w_c}_2^2+\frac{\nu_c}{m_{-c}} \sum_{i\in\mathcal{X}_{-c}} \max_{\norm{\delta^i}_p\leq \varepsilon_i} \big[\big(x^{i^\top}+\delta^{i^\top}\big) w_c+\theta_c\big]+\frac{\alpha_c}{m_c}\sum_{i\in\mathcal{X}_{c}} \xi_{c,i} \\
\text{s.t.} \quad & x^\top w_c+\theta_c \geq -\xi_{c,i} \qquad i\in\mathcal{X}_{c}, \forall x\in \mathcal{U}_{p}(x^i) \\
 & \xi_{c,i} \geq 0 \qquad i \in \mathcal{X}_{c}.
\end{aligned}
\end{equation}
\end{linenomath}

Since there exist infinite possibilities for choosing $x\in\mathcal{U}_{p}(x^i)$ in the first set of constraints, model \eqref{robust_linear_intractable} is not solvable in practice. However, a tractable closed-form reformulation is provided in the following theorem.

\begin{theorem}
Let $\mathcal{U}_{p}(x^i)$ be the uncertainty set as in \eqref{uncertainty_set_input_space}, with $p\in[1,\infty]$. Let $p'$ be the H\"older conjugate of $p$, namely $1/p+1/p'=1$. Model \eqref{robust_linear_intractable} is equivalent to:
\begin{linenomath}
\begin{equation} \label{robust_linear_tractable}
\begin{aligned}
\min_{w_c,\theta_c,\xi_c}  \quad & \frac{1}{2}\norm{w_c}_2^2+\frac{\nu_c}{m_{-c}} \sum_{i\in\mathcal{X}_{-c}} \big(x^{i^\top} w_c+\varepsilon_i\norm{w_c}_{p'}\big)+\nu_c \theta_c +\frac{\alpha_c}{m_c}\sum_{i\in\mathcal{X}_{c}} \xi_{c,i} \\
\text{s.t.} \quad & x^{i^\top} w_c+\theta_c-\varepsilon_i \norm{w_c}_{p'} \geq -\xi_{c,i} \qquad i\in\mathcal{X}_{c} \\
 & \xi_{c,i} \geq 0 \qquad i \in \mathcal{X}_{c}.
\end{aligned}
\end{equation}
\end{linenomath}
\end{theorem}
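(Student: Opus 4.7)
The plan is to reduce the intractable model \eqref{robust_linear_intractable} to a deterministic convex program by evaluating the two worst-case operators (one inside the objective, one inside the constraint) in closed form via the dual-norm characterization. The only non-trivial ingredient is Hölder's inequality: for any $w\in\mathbb{R}^n$,
\begin{equation*}
\max_{\norm{\delta}_p\leq \varepsilon}\delta^\top w = \varepsilon \norm{w}_{p'},\qquad \min_{\norm{\delta}_p\leq \varepsilon}\delta^\top w = -\varepsilon \norm{w}_{p'},
\end{equation*}
where $p'$ is the Hölder conjugate of $p$. These are attained at $\delta = \pm \varepsilon\, u$ with $u$ a maximizer of $\delta^\top w$ over the unit $\ell_p$-ball.

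First I would handle the objective. For each $i\in\mathcal{X}_{-c}$, the inner maximization decouples additively as
\begin{equation*}
\max_{\norm{\delta^i}_p\leq \varepsilon_i}\bigl[(x^{i\top}+\delta^{i\top})w_c+\theta_c\bigr] = x^{i\top}w_c+\theta_c + \max_{\norm{\delta^i}_p\leq \varepsilon_i}\delta^{i\top}w_c = x^{i\top}w_c+\theta_c+\varepsilon_i\norm{w_c}_{p'}.
\end{equation*}
Summing over $i\in\mathcal{X}_{-c}$, the constant-in-$i$ term $\theta_c$ contributes $m_{-c}\theta_c$, which after multiplication by $\nu_c/m_{-c}$ collapses to $\nu_c \theta_c$, matching the objective of \eqref{robust_linear_tractable}.

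Next I would handle the semi-infinite constraint. For a fixed $i\in\mathcal{X}_{c}$, the requirement ``$x^\top w_c+\theta_c \geq -\xi_{c,i}$ for all $x\in\mathcal{U}_p(x^i)$'' is equivalent to its worst-case version
\begin{equation*}
\min_{\norm{\delta^i}_p\leq \varepsilon_i}\bigl[(x^{i\top}+\delta^{i\top})w_c+\theta_c\bigr]\geq -\xi_{c,i},
\end{equation*}
and by the dual-norm identity the left-hand side equals $x^{i\top}w_c+\theta_c-\varepsilon_i\norm{w_c}_{p'}$, yielding exactly the constraints in \eqref{robust_linear_tractable}. The nonnegativity constraints on $\xi_c$ and the quadratic term $\tfrac{1}{2}\norm{w_c}_2^2$ are unaffected since they do not involve the uncertain data.

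The main obstacle is essentially bookkeeping: applying Hölder's inequality with the correct sign in both directions (maximization in the objective, minimization in the constraint) and confirming that the $\theta_c$ contribution from the summation is correctly absorbed into the single term $\nu_c\theta_c$. Once these two dual-norm evaluations are stated and combined, the equivalence between \eqref{robust_linear_intractable} and \eqref{robust_linear_tractable} follows directly, and no further approximation is needed — the reformulation is exact for every $p\in[1,\infty]$.
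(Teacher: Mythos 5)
Your proposal is correct and follows essentially the same route as the paper's own proof: both replace the inner maximization in the objective and the semi-infinite constraint by their worst-case values via the dual-norm identity $\max_{\norm{\delta}_p\le\varepsilon}\delta^\top w=\varepsilon\norm{w}_{p'}$ (and its negative for the minimum), and both absorb the summed $\theta_c$ contribution into the single term $\nu_c\theta_c$. No gaps; the reformulation is exact, as you note.
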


\begin{proof}
Following the RO approach of \cite{BerDunPawZhu2019}, model \eqref{robust_linear_intractable} can be expressed as follows:
\begin{linenomath}
\begin{equation} \label{robust_linear_minmax}
\begin{aligned}
\min_{w_c,\theta_c,\xi_c}  \quad & \frac{1}{2}\norm{w_c}_2^2+\frac{\nu_c}{m_{-c}} \sum_{i\in\mathcal{X}_{-c}} \max_{\norm{\delta^i}_p\leq \varepsilon_i} \big[\big(x^{i^\top}+\delta^{i^\top}\big) w_c+\theta_c\big]+\frac{\alpha_c}{m_c}\sum_{i\in\mathcal{X}_{c}} \xi_{c,i} \\
\text{s.t.} \quad & \min_{\norm{\delta^i}_p\leq \varepsilon_i} \big[\big(x^{i^\top}+\delta^{i^\top}\big) w_c\big]+\theta_c \geq -\xi_{c,i} \qquad i\in\mathcal{X}_{c} \\
 & \xi_{c,i} \geq 0 \qquad i \in \mathcal{X}_{c}.
\end{aligned}
\end{equation}
\end{linenomath}
The maximization term in the objective function corresponds to:
\begin{linenomath}
\begin{equation} \label{mimax_objfunc_linear}
\frac{\nu_c}{m_{-c}} \sum_{i\in\mathcal{X}_{-c}} \max_{\norm{\delta^i}_p\leq \varepsilon_i} \big[x^{i^\top}w_c+\delta^{i^\top} w_c+\theta_c\big] = \nu_c\theta_c+ \frac{\nu_c}{m_{-c}} \sum_{i\in\mathcal{X}_{-c}} \bigg( x^{i^\top} w_c+ \max_{\norm{\delta^i}_p\leq \varepsilon_i} \big[\delta^{i^\top}w_c\big]\bigg).
\end{equation}
\end{linenomath}
Similarly, the minimization term in the first set of constraints is equivalent to:
\begin{equation} \label{minmax_constr_linear}
\min_{\norm{\delta^i}_p\leq \varepsilon_i} \big[x^{i^\top}w_c+\delta^{i^\top}w_c\big] = x^{i^\top}w_c+\min_{\norm{\delta^i}_p\leq \varepsilon_i} \big[\delta^{i^\top}w_c\big].
\end{equation}
We observe that the optimization problems in \eqref{mimax_objfunc_linear} and in \eqref{minmax_constr_linear} have the same feasible set, $\{\delta^i:\norm{\delta^i}_p\leq \varepsilon_i\}$, and the same objective function, $\delta^{i^\top}w_c$. By applying the H\"older inequality (see \cite{Rud1987}), we get:
\begin{linenomath}
\begin{equation} \label{deltawc_proof}
-\varepsilon_i \norm{w_c}_{p'} \leq \delta^{i^\top}w_c \leq \varepsilon_i \norm{w_c}_{p'} \qquad \forall \delta^i:\norm{\delta^i}_p\leq \varepsilon_i,
\end{equation}
\end{linenomath}
where $p'$ is the H\"older conjugate of $p$. Consequently, the optimization problems in \eqref{mimax_objfunc_linear} and in \eqref{minmax_constr_linear} have the following optimal values:
\begin{equation}
\max_{\norm{\delta^i}_p\leq \varepsilon_i} \big[\delta^{i^\top}w_c\big]=\varepsilon_i \norm{w_c}_{p'} \quad \text{and} \quad \min_{\norm{\delta^i}_p\leq \varepsilon_i} \big[\delta^{i^\top}w_c\big]=-\varepsilon_i \norm{w_c}_{p'}.
\end{equation}
Therefore, the second term in the objective function of \eqref{robust_linear_minmax} corresponds to:
\begin{linenomath}
\begin{equation*}
\nu_c\theta_c+ \frac{\nu_c}{m_{-c}} \sum_{i\in\mathcal{X}_{-c}} \bigg( x^{i^\top} w_c+ \varepsilon_i\norm{w_c}_{p'}\bigg),
\end{equation*}
\end{linenomath}
while the first set of constraints can be written as follows:
\begin{linenomath}
\begin{equation*}
x^{i^\top}w_c + \theta_c -\varepsilon_i \norm{w_c}_{p'} \geq -\xi_{c,i}, \quad i \in \mathcal{X}_c.
\end{equation*}
\end{linenomath}
This concludes the proof.
\end{proof}

If no uncertainty occurs in the training samples, $\varepsilon_i=0$ for all $i\in\mathcal{X}$ and the robust model \eqref{robust_linear_tractable} reduces to the deterministic model \eqref{multiclass_linear_TPMSVM}. We notice that model \eqref{robust_linear_tractable} is a convex nonlinear optimization model due to the presence of the $\ell_{2}$- and $\ell_{p'}$-norm of $w_c$. However, the quadratic term $\norm{w_c}_2^2$ can be easily transformed from the objective function to the constraints by introducing auxiliary variables $t_c$, $u_c$, $v_c\in\mathbb{R}$ (see \cite{QiTianShi2013}), leading to:
\begin{linenomath}
\begin{equation} \label{robust_linear_SOCP_p'}
\begin{aligned}
\min_{w_c,\theta_c,\xi_c,t_c,u_c,v_c}  \quad & \frac{1}{2}(u_c-v_c)+\frac{\nu_c}{m_{-c}} \sum_{i\in\mathcal{X}_{-c}} \big(x^{i^\top} w_c+\varepsilon_i\norm{w_c}_{p'}\big)+\nu_c \theta_c +\frac{\alpha_c}{m_c}\sum_{i\in\mathcal{X}_{c}} \xi_{c,i} \\
\text{s.t.} \quad & x^{i^\top} w_c+\theta_c-\varepsilon_i \norm{w_c}_{p'} \geq -\xi_{c,i} \qquad i\in\mathcal{X}_{c} \\
& t_c \geq \norm{w_c}_2\\
& u_c+v_c = 1\\
& u_c \geq \sqrt{t_c^2+v_c^2}\\
 & \xi_{c,i} \geq 0 \qquad i \in \mathcal{X}_{c}.
\end{aligned}
\end{equation}
\end{linenomath}

Model \eqref{robust_linear_SOCP_p'} is still nonlinear, but for specific choices of the $\ell_{p}$-norm used in the definition of the uncertainty set \eqref{uncertainty_set_input_space}, it can be reformulated as a SOCP model, as stated in the following result.

\begin{corollary} \label{corollary_robust_linear}
Let $\mathcal{U}_{p}(x^i)$ be the uncertainty set as in \eqref{uncertainty_set_input_space}. Model \eqref{robust_linear_SOCP_p'} can be expressed as a SOCP model in the following cases:
\begin{itemize}
\item[a)] Case $p=1$:
\begin{linenomath}
\begin{equation} \label{robust_linear_SOCP_polyhedral}
\begin{aligned}
\min_{w_c,\theta_c,\xi_c,t_c,u_c,v_c,s_c}  \quad & \frac{1}{2}(u_c-v_c)+\frac{\nu_c}{m_{-c}} \sum_{i\in\mathcal{X}_{-c}} \big(x^{i^\top} w_c+\varepsilon_i s_c\big)+\nu_c \theta_c +\frac{\alpha_c}{m_c}\sum_{i\in\mathcal{X}_{c}} \xi_{c,i} \\
\text{s.t.} \quad & x^{i^\top} w_c+\theta_c-\varepsilon_i s_c \geq -\xi_{c,i} \qquad i\in\mathcal{X}_{c} \\
& t_c \geq \norm{w_c}_2\\
& u_c+v_c = 1\\
& u_c \geq \sqrt{t_c^2+v_c^2}\\
& s_{c} \geq -w_{c,j} \qquad j=1,\ldots,n \\
& s_{c} \geq w_{c,j} \qquad j=1,\ldots,n \\
 & s_{c} \geq 0 \\
 & \xi_{c,i} \geq 0 \qquad i \in \mathcal{X}_{c}.
\end{aligned}
\end{equation}
\end{linenomath}

\item[b)] Case $p=2$:
\begin{linenomath}
\begin{equation} \label{robust_linear_SOCP_spherical}
\begin{aligned}
\min_{w_c,\theta_c,\xi_c,t_c,u_c,v_c}  \quad & \frac{1}{2}(u_c-v_c)+\frac{\nu_c}{m_{-c}} \sum_{i\in\mathcal{X}_{-c}} \big(x^{i^\top} w_c+\varepsilon_i t_c\big)+\nu_c \theta_c +\frac{\alpha_c}{m_c}\sum_{i\in\mathcal{X}_{c}} \xi_{c,i}\\
\text{s.t.} \quad & x^{i^\top} w_c+\theta_c-\varepsilon_i t_c \geq -\xi_{c,i} \qquad i\in\mathcal{X}_{c} \\
& t_c \geq \norm{w_c}_2\\
& u_c+v_c = 1\\
& u_c \geq \sqrt{t_c^2+v_c^2}\\
 & \xi_{c,i} \geq 0 \qquad i \in \mathcal{X}_{c}.
\end{aligned}
\end{equation}
\end{linenomath}

\item[c)] Case $p=\infty$:
\begin{linenomath}
\begin{equation} \label{robust_linear_SOCP_box}
\begin{aligned}
\min_{w_c,\theta_c,\xi_c,t_c,u_c,v_c,s_c}  \quad & \frac{1}{2}(u_c-v_c)+\frac{\nu_c}{m_{-c}} \sum_{i\in\mathcal{X}_{-c}} \big(x^{i^\top} w_c+\varepsilon_i \sum_{j=1}^n s_{c,j}\big)+\nu_c \theta_c +\frac{\alpha_c}{m_c}\sum_{i\in\mathcal{X}_{c}} \xi_{c,i} \\
\text{s.t.} \quad & x^{i^\top} w_c+\theta_c-\varepsilon_i \sum_{j=1}^n s_{c,j} \geq -\xi_{c,i} \qquad i\in\mathcal{X}_{c} \\
& t_c \geq \norm{w_c}_2\\
& u_c+v_c = 1\\
& u_c \geq \sqrt{t_c^2+v_c^2}\\
& s_{c,j} \geq -w_{c,j} \qquad j=1,\ldots,n \\
& s_{c,j} \geq w_{c,j} \qquad j=1,\ldots,n \\
 & s_{c,j} \geq 0 \qquad j=1,\ldots,n \\
 & \xi_{c,i} \geq 0 \qquad i \in \mathcal{X}_{c} .
\end{aligned}
\end{equation}
\end{linenomath}

\end{itemize}
\end{corollary}

\begin{proof}

\begin{itemize}
\item[a)] If $p=1$, then $p'=\infty$. By introducing an auxiliary variable $s_c \geq 0$ equal to $\norm{w_c}_{\infty}$, and adding the constraints $s_c \geq -w_{c,j}$ and $s_c \geq w_{c,j}$ for all $j=1,\ldots,n$, model \eqref{robust_linear_SOCP_p'} is equivalent to model \eqref{robust_linear_SOCP_polyhedral}.
\item[b)] If $p=2$, then $p'=2$, and so $\norm{w_c}_{p'}=\norm{w_c}_{2}=t_c$. The equivalence between models \eqref{robust_linear_SOCP_p'} and \eqref{robust_linear_SOCP_spherical} follows straightforwardly.
\item[c)] If $p=\infty$, then $p'=1$. In this case model \eqref{robust_linear_SOCP_p'} can be rewritten as model \eqref{robust_linear_SOCP_box} by introducing an auxiliary vector $s_c \in \mathbb{R}^n$ such that each component $s_{c,j}$ is equal to $\abs{w_{c,j}}$ and adding the constraints $s_{c,j} \geq 0$, $s_{c,j} \geq - w_{c,j}$ and $s_{c,j} \geq w_{c,j}$ for all $j=1,\ldots,n$.
\end{itemize}
\end{proof}

As in the deterministic setting (see Section \ref{sec_linear_multiclass}), once the optimal solutions $(w_c,\theta_c)$ are obtained for all $c=1,\ldots,C$, the classification of each new observation is performed according to decision functions \eqref{dec_func_multiclass_linear_argmin}-\eqref{dec_func_multiclass_linear_argmax}.

SOCP models are typically solved using specialized algorithms. One of the most widely used is the \emph{primal-dual interior-point method}, whose computational complexity is bounded by $\mathcal{O}\left(N^{7/2} \log(1/\eta)\right)$, where $N$ denotes the total number of variables and $\eta$ is the duality gap threshold that determines the solution accuracy (see \cite{wright1997primal}). For the robust models in problems \eqref{robust_linear_SOCP_polyhedral}, \eqref{robust_linear_SOCP_spherical}, and \eqref{robust_linear_SOCP_box}, the corresponding problem sizes are  $N = n + m_c + 5 $, $ N = n + m_c + 4 $, and $N = 2n + m_c + 4 $, respectively. Since a SOCP model is solved independently for each class, the overall time complexity scales with the number of classes $C$, yielding a bound on the total computational complexity of $C \cdot \mathcal{O}\left(N^{7/2} \log(1/\eta)\right)$ for each formulation. Note that, since $m_c \leq m$, we can substitute $m_c$ with $m$ in each expression of $N$ to obtain a valid upper bound on the total complexity.

\subsection{The robust TPMSVM for nonlinear multiclass classification}  \label{sec_rob_nonlin_multiclass}
Let $k:\mathbb{R}^n\times\mathbb{R}^n\to\mathbb{R}$ be a symmetric and positive semidefinite kernel, with feature map $\varphi:\mathcal{H}\to\mathbb{R}$.
Similarly to \cite{TraGil2006,MagSpi2023}, for each training observation $x^i\in\mathbb{R}^n$, we model the uncertainty set centered at $\varphi(x^i)\in\mathcal{H}$ as follows:
\begin{linenomath}
\begin{equation} \label{uncertainty_set_feature_space}
\mathcal{U}_{\mathcal{H}}\big(\varphi(x^i)\big):=\big\{z\in\mathcal{H} \lvert z=\varphi(x^i)+\widetilde{\delta}^i, \norm{\widetilde{\delta}^i}_{\mathcal{H}}\leq \widetilde{\varepsilon}_i\big\},
\end{equation}
\end{linenomath}
where the perturbation $\widetilde{\delta}^i$ belongs to $\mathcal{H}$ and its $\mathcal{H}$-norm is bounded by a constant $\widetilde{\varepsilon}_i\geq 0$. The value of $\widetilde{\varepsilon}_i$ depends on the radius $\varepsilon_i$ of the corresponding uncertainty set $\mathcal{U}_p(x^i)$ in the input space \eqref{uncertainty_set_input_space}. Closed-form expressions of $\widetilde{\varepsilon}_i$ for kernel functions $k(\cdot,\cdot)$ typically used in the ML literature are derived in \cite{MagSpi2023}. Additionally, a perturbation $\widetilde{\delta}^i$ in the feature space arises if and only if a perturbation $\delta^i$ occurs in the input space. Consequently, $\varepsilon_i=0$ implies $\widetilde{\varepsilon}_i=0$.

As in Section \ref{sec_rob_lin_multiclass}, we start by robustifying model \eqref{multiclass_nonlinear_TPMSVM} over the uncertainty set \eqref{uncertainty_set_feature_space}, obtaining the following optimization model:
\begin{linenomath}
\begin{equation} \label{robust_nonlinear_intractable}
\begin{aligned}
\min_{\widetilde{w}_c,{\widetilde{\theta}_c},\xi_c}  \quad & \frac{1}{2}\norm{\widetilde{w}_c}_{\mathcal{H}}^2+\frac{\nu_c}{m_{-c}} \sum_{i\in\mathcal{X}_{-c}} \max_{\norm{\widetilde{\delta}^i}_{\mathcal{H}}\leq \widetilde{\varepsilon}_i}  \big[\langle \widetilde{w}_c, \varphi(x^i)+\widetilde{\delta}^i \rangle_{\mathcal{H}} + {\widetilde{\theta}_c} \big]+\frac{\alpha_c}{m_c}\sum_{i \in \mathcal{X}_{c}} \xi_{c,i} \\
\text{s.t.} \quad & \langle \widetilde{w}_c, z \rangle_{\mathcal{H}} + {\widetilde{\theta}_c} \geq - \xi_{c,i} \qquad i \in \mathcal{X}_c, \forall z \in \mathcal{U}_{\mathcal{H}}\big(\varphi(x^i)\big) \\
 & \xi_{c,i}\geq 0 \qquad i \in \mathcal{X}_c.
\end{aligned}
\end{equation}
\end{linenomath}

Model \eqref{robust_nonlinear_intractable} is intractable in practice, due to the infinite possible choices of $z \in \mathcal{U}_{\mathcal{H}}\big(\varphi(x^i)\big)$ in the first set of constraints. Nevertheless, it can be reformulated in a tractable form, as stated in the following theorem.

\begin{theorem}
Let $\mathcal{U}_{\mathcal{H}}\big(\varphi(x^i)\big)$ be the uncertainty set as in \eqref{uncertainty_set_feature_space}. Model \eqref{robust_nonlinear_intractable} is equivalent to:
\begin{linenomath}
\begin{equation} \label{robust_nonlinear_tractable}
\begin{aligned}
\min_{\beta_c,{\widetilde{\theta}_c},\xi_c}  \quad & \frac{1}{2}\beta_{c}^\top K \beta_c+\frac{\nu_c}{m_{-c}} \sum_{i\in\mathcal{X}_{-c}} \bigg[\widetilde{\varepsilon}_i \sqrt{\beta_{c}^\top K \beta_{c}} + \sum_{j\in\mathcal{X}} \beta_{c,j} k(x^i,x^j)\bigg] + \nu_c{\widetilde{\theta}_c} +\frac{\alpha_c}{m_c}\sum_{i \in \mathcal{X}_{c}} \xi_{c,i} \\
\text{s.t.} \quad & {\widetilde{\theta}_c} -\widetilde{\varepsilon}_i \sqrt{\beta_{c}^\top K \beta_{c}} + \sum_{j\in\mathcal{X}} \beta_{c,j} k(x^i,x^j) \geq - \xi_{c,i} \qquad i \in \mathcal{X}_c \\
& \beta_{c,i}=-\frac{\nu_c}{m_{-c}} \qquad i\in\mathcal{X}_{-c} \\
 & \xi_{c,i}\geq 0 \qquad i \in \mathcal{X}_c.
\end{aligned}
\end{equation}
\end{linenomath}
\end{theorem}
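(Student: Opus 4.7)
My plan is to mirror the proof of the linear case (model \eqref{robust_linear_tractable}), carried out in the Hilbert space $\mathcal{H}$, and then to eliminate the unavailable feature map $\varphi(\cdot)$ via a representer-type ansatz so that only kernel evaluations remain.

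First, I would rewrite the semi-infinite constraint in \eqref{robust_nonlinear_intractable} as a worst-case one, exactly as in the linear proof: for each $i\in\mathcal{X}_c$,
\begin{equation*}
\langle \widetilde{w}_c, z\rangle_{\mathcal{H}}+\theta_c\geq -\xi_{c,i},\ \forall z\in\mathcal{U}_{\mathcal{H}}\bigl(\varphi(x^i)\bigr)
\ \Longleftrightarrow\
\min_{\|\widetilde{\delta}^i\|_{\mathcal{H}}\leq \widetilde{\varepsilon}_i}\langle \widetilde{w}_c,\widetilde{\delta}^i\rangle_{\mathcal{H}}+\langle \widetilde{w}_c,\varphi(x^i)\rangle_{\mathcal{H}}+\theta_c\geq -\xi_{c,i}.
\end{equation*}
Since $\mathcal{H}$ is a Hilbert space, its norm is self-dual, so Cauchy--Schwarz gives
\begin{equation*}
\max_{\|\widetilde{\delta}^i\|_{\mathcal{H}}\leq \widetilde{\varepsilon}_i}\langle \widetilde{w}_c,\widetilde{\delta}^i\rangle_{\mathcal{H}}=\widetilde{\varepsilon}_i\,\norm{\widetilde{w}_c}_{\mathcal{H}},
\end{equation*}
and the corresponding minimum equals $-\widetilde{\varepsilon}_i\norm{\widetilde{w}_c}_{\mathcal{H}}$. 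Plugging these into the objective (and separating the constant term $\nu_c\theta_c$ from the summation, exactly as in the previous theorem's proof) and into the constraints reduces the problem to an intermediate model involving only $\widetilde{w}_c$, $\norm{\widetilde{w}_c}_{\mathcal{H}}$, and the inner products $\langle \widetilde{w}_c,\varphi(x^i)\rangle_{\mathcal{H}}$.

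Next I would express $\widetilde{w}_c$ via the representer-theorem ansatz suggested by the KKT expression \eqref{optimal_wc_bc_nonlinear}, namely
\begin{equation*}
\widetilde{w}_c=\sum_{i\in\mathcal{X}}\beta_{c,i}\,\varphi(x^i),\qquad
\beta_{c,i}=-\tfrac{\nu_c}{m_{-c}}\text{ for }i\in\mathcal{X}_{-c},
\end{equation*}
with $\beta_{c,i}$ free (playing the role of $\lambda_{c,i}$) for $i\in\mathcal{X}_c$. The second condition is the explicit constraint appearing in \eqref{robust_nonlinear_tractable}. Substituting this decomposition yields the kernel identities
\begin{equation*}
\norm{\widetilde{w}_c}_{\mathcal{H}}^2=\beta_c^{\top}K\beta_c,\qquad
\langle \widetilde{w}_c,\varphi(x^i)\rangle_{\mathcal{H}}=\sum_{j\in\mathcal{X}}\beta_{c,j}\,k(x^i,x^j),
\end{equation*}
where $K$ is the full Gram matrix on $\mathcal{X}$. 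Substituting these identities into the intermediate model produces exactly \eqref{robust_nonlinear_tractable}, with the square root $\sqrt{\beta_c^\top K \beta_c}$ arising from $\norm{\widetilde{w}_c}_{\mathcal{H}}$.

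The main obstacle I expect is justifying that the ansatz for $\widetilde{w}_c$ is without loss of generality. I would argue it as follows: decompose $\widetilde{w}_c = \widetilde{w}_c^\parallel + \widetilde{w}_c^\perp$, where $\widetilde{w}_c^\parallel\in \mathrm{span}\{\varphi(x^i):i\in\mathcal{X}\}$ and $\widetilde{w}_c^\perp$ is orthogonal to that span. Then all inner products $\langle \widetilde{w}_c,\varphi(x^i)\rangle_{\mathcal{H}}$ depend only on $\widetilde{w}_c^\parallel$, while $\norm{\widetilde{w}_c}_{\mathcal{H}}^2=\norm{\widetilde{w}_c^\parallel}_{\mathcal{H}}^2+\norm{\widetilde{w}_c^\perp}_{\mathcal{H}}^2$ is minimized by setting $\widetilde{w}_c^\perp=0$. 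Since both the quadratic penalty $\tfrac12\norm{\widetilde{w}_c}_{\mathcal{H}}^2$ and the new robustness terms $\widetilde{\varepsilon}_i\norm{\widetilde{w}_c}_{\mathcal{H}}$ in the objective are non-decreasing in $\norm{\widetilde{w}_c^\perp}_{\mathcal{H}}$, and the constraints do not constrain $\widetilde{w}_c^\perp$, an optimal solution always lies in the span; this legitimizes the substitution and completes the proof.
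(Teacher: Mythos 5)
Your proposal is correct and follows essentially the same route as the paper: resolve the inner max/min over the perturbation ball via Cauchy--Schwarz (self-duality of the $\mathcal{H}$-norm), then expand $\widetilde{w}_c=\sum_{i\in\mathcal{X}}\beta_{c,i}\varphi(x^i)$ with $\beta_{c,i}=-\nu_c/m_{-c}$ on $\mathcal{X}_{-c}$ as dictated by \eqref{optimal_wc_bc_nonlinear}, and substitute the kernel identities. Your closing orthogonal-decomposition argument for why the span restriction is without loss of generality is a welcome extra justification that the paper simply asserts by appeal to the KKT expression, but note it only legitimizes membership in the span, not the specific fixing of the $\mathcal{X}_{-c}$ coefficients, which (as in the paper) still rests on the KKT structure of the deterministic problem.
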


\begin{proof}
Optimizing with respect to the worst-case realization across the uncertainty set \eqref{uncertainty_set_feature_space} yields the following equivalent reformulation of model \eqref{robust_nonlinear_intractable}:
\begin{linenomath}
\begin{equation} \label{model_proof_nonlinearr}
\begin{aligned}
\min_{\widetilde{w}_c,{\widetilde{\theta}_c},\xi_c}  \quad & \frac{1}{2}\norm{\widetilde{w}_c}_{\mathcal{H}}^2+\frac{\nu_c}{m_{-c}} \sum_{i\in\mathcal{X}_{-c}} \max_{\norm{\widetilde{\delta}^i}_{\mathcal{H}}\leq \widetilde{\varepsilon}_i}  \big[\langle \widetilde{w}_c, \varphi(x^i)+\widetilde{\delta}^i \rangle_{\mathcal{H}} + {\widetilde{\theta}_c} \big]+\frac{\alpha_c}{m_c}\sum_{i \in \mathcal{X}_{c}} \xi_{c,i} \\
\text{s.t.} \quad & \min_{\norm{\widetilde{\delta}^i}_{\mathcal{H}}\leq \widetilde{\varepsilon}_i}  \big[\langle \widetilde{w}_c, \varphi(x^i)+\widetilde{\delta}^i \rangle_{\mathcal{H}} + {\widetilde{\theta}_c} \big] \geq - \xi_{c,i} \qquad i \in \mathcal{X}_c \\
 & \xi_{c,i}\geq 0 \qquad i \in \mathcal{X}_c.
\end{aligned}
\end{equation}
\end{linenomath}
The maximization problem in the second term of the objective function in \eqref{model_proof_nonlinearr} can be rewritten as follows:
\begin{linenomath}
\begin{equation*}
\max_{\norm{\widetilde{\delta}^i}_{\mathcal{H}}\leq \widetilde{\varepsilon}_i}  \big[\langle \widetilde{w}_c, \varphi(x^i)+\widetilde{\delta}^i \rangle_{\mathcal{H}} + {\widetilde{\theta}_c} \big] = {\widetilde{\theta}_c} + \langle \widetilde{w}_c,\varphi(x^i) \rangle_{\mathcal{H}} + \max_{\norm{\widetilde{\delta}^i}_{\mathcal{H}}\leq \widetilde{\varepsilon}_i}  \big[\langle \widetilde{w}_c,\widetilde{\delta}^i \rangle_{\mathcal{H}}\big].
\end{equation*}
\end{linenomath}
The same argument holds for the minimization problem in the first set of constraints. By applying the Cauchy-Schwarz inequality in $\mathcal{H}$ and the structure of the uncertainty set \eqref{uncertainty_set_feature_space}, we obtain:
\begin{equation*}
\abs{\langle \widetilde{w}_c, \widetilde{\delta}^i \rangle_{\mathcal{H}} }  \leq \norm{\widetilde{w}_c}_{\mathcal{H}}  \norm{\widetilde{\delta}^i}_{\mathcal{H}}\leq \norm{\widetilde{w}_c}_{\mathcal{H}}  \widetilde{\varepsilon}_i \qquad \forall \widetilde{\delta}^i:\norm{\widetilde{\delta}^i}_{\mathcal{H}}\leq \widetilde{\varepsilon}_i.
\end{equation*}
Therefore, the optimal values of the maximization and minimization problems described above are respectively:
\begin{linenomath}
\begin{equation*}
\max_{\norm{\widetilde{\delta}^i}_{\mathcal{H}}\leq \widetilde{\varepsilon}_i}  \big[\langle \widetilde{w}_c, \varphi(x^i)+\widetilde{\delta}^i \rangle_{\mathcal{H}} + {\widetilde{\theta}_c} \big] = {\widetilde{\theta}_c} + \langle \widetilde{w}_c,\varphi(x^i) \rangle_{\mathcal{H}} + \widetilde{\varepsilon}_i \norm{\widetilde{w}_c}_{\mathcal{H}}
\end{equation*}
\end{linenomath}
and
\begin{linenomath}
\begin{equation*}
\min_{\norm{\widetilde{\delta}^i}_{\mathcal{H}}\leq \widetilde{\varepsilon}_i}  \big[\langle \widetilde{w}_c, \varphi(x^i)+\widetilde{\delta}^i \rangle_{\mathcal{H}} + {\widetilde{\theta}_c} \big] = {\widetilde{\theta}_c} + \langle \widetilde{w}_c,\varphi(x^i) \rangle_{\mathcal{H}} - \widetilde{\varepsilon}_i \norm{\widetilde{w}_c}_{\mathcal{H}}.
\end{equation*}
\end{linenomath}
Consequently, model \eqref{model_proof_nonlinearr} is equivalent to:
\begin{linenomath}
\begin{equation} \label{robust_nonlinear_intractable_proof}
\begin{aligned}
\min_{\widetilde{w}_c,{\widetilde{\theta}_c},\xi_c}  \quad & \frac{1}{2}\norm{\widetilde{w}_c}_{\mathcal{H}}^2+\frac{\nu_c}{m_{-c}} \sum_{i\in\mathcal{X}_{-c}} \bigg[\widetilde{\varepsilon}_i \norm{\widetilde{w}_c}_{\mathcal{H}} + \langle \widetilde{w}_c,\varphi(x^i) \rangle_{\mathcal{H}} \bigg] + \nu_c{\widetilde{\theta}_c}  +\frac{\alpha_c}{m_c}\sum_{i \in \mathcal{X}_{c}} \xi_{c,i} \\
\text{s.t.} \quad & {\widetilde{\theta}_c} - \widetilde{\varepsilon}_i \norm{\widetilde{w}_c}_{\mathcal{H}} + \langle \widetilde{w}_c,\varphi(x^i) \rangle_{\mathcal{H}}  \geq - \xi_{c,i} \qquad i \in \mathcal{X}_c \\
 & \xi_{c,i}\geq 0 \qquad i \in \mathcal{X}_c.
\end{aligned}
\end{equation}
\end{linenomath}
In the feature space $\mathcal{H}$, vector $\widetilde{w}_c$ can be decomposed as a linear combination of mapped input data by means of $\varphi(\cdot)$ (see \cite{CorVap1995}). Specifically, by considering equation \eqref{optimal_wc_bc_nonlinear}, it holds that:
\begin{linenomath}
\begin{equation*}
\widetilde{w}_{c} = \sum_{i\in\mathcal{X}_{c}} \lambda_{c,i} \varphi(x^i)-\frac{\nu_{c}}{m_{-c}}\sum_{i\in\mathcal{X}_{-c}} \varphi(x^i) = \sum_{i \in\mathcal{X}} \beta_{c,i} \varphi(x^i),
\end{equation*}
\end{linenomath}
where $\beta_{c}\in\mathbb{R}^m$ and $\beta_{c,i}=-\nu_c/m_{-c}$, for all $i\in\mathcal{X}_{-c}$. Therefore, the squared norm of $\widetilde{w}_{{c}}$ in the objective function of \eqref{robust_nonlinear_intractable_proof} corresponds to:
\begin{linenomath}
\begin{equation} \label{norm_wtilde_H}
\norm{\widetilde{w}_c}^2_{\mathcal{H}}=\langle \widetilde{w}_c,\widetilde{w}_c \rangle_{\mathcal{H}} = \sum_{i,j\in\mathcal{X}} \beta_{c,i}k(x^i,x^j)\beta_{c,j} = \beta_{c}^\top K \beta_{c},
\end{equation}
\end{linenomath}
where $K$ is the Gram matrix of the entire training set, i.e. $K_{ij}=K_{ji}=k(x^i,x^j)$, with $i,j\in\mathcal{X}$. For all $i\in\mathcal{X}_c$, the dot product $\langle \widetilde{w}_c, \varphi(x^i) \rangle_{\mathcal{H}}$ in the objective function and in the first set of constraints of \eqref{robust_nonlinear_intractable_proof} can be rewritten as:
\begin{equation} \label{dot_product_proof}
\begin{linenomath}
\langle \widetilde{w}_c, \varphi(x^i) \rangle_{\mathcal{H}} = \sum_{j\in\mathcal{X}} \beta_{c,j} k(x^i,x^j).
\end{linenomath}
\end{equation}
Thus, model \eqref{robust_nonlinear_intractable_proof} is equivalent to:
\begin{linenomath}
\begin{equation*}
\begin{aligned}
\min_{\beta_c,{\widetilde{\theta}_c},\xi_c}  \quad & \frac{1}{2}\beta_{c}^\top K \beta_c+\frac{\nu_c}{m_{-c}} \sum_{i\in\mathcal{X}_{-c}} \bigg[\widetilde{\varepsilon}_i \sqrt{\beta_{c}^\top K \beta_{c}} + \sum_{j\in\mathcal{X}} \beta_{c,j} k(x^i,x^j)\bigg] + \nu_c{\widetilde{\theta}_c}  +\frac{\alpha_c}{m_c}\sum_{i \in \mathcal{X}_{c}} \xi_{c,i} \\
\text{s.t.} \quad & {\widetilde{\theta}_c} -\widetilde{\varepsilon}_i \sqrt{\beta_{c}^\top K \beta_{c}} + \sum_{j\in\mathcal{X}} \beta_{c,j} k(x^i,x^j) \geq - \xi_{c,i} \qquad i \in \mathcal{X}_c \\
& \beta_{c,i}=-\frac{\nu_c}{m_{-c}} \qquad i\in\mathcal{X}_{-c} \\
 & \xi_{c,i}\geq 0 \qquad i \in \mathcal{X}_c.
\end{aligned}
\end{equation*}
\end{linenomath}
This concludes the proof.
\end{proof}

Model \eqref{robust_nonlinear_tractable} can be reformulated as the following SOCP model by introducing variables $t_c$, $u_c$, $v_c \in \mathbb{R}$:
\begin{linenomath}
\begin{equation} \label{robust_nonlinear_SOCP}
\begin{aligned}
\min_{\beta_c,{\widetilde{\theta}_c},\xi_c,t_c,u_c,v_c}  \quad & \frac{1}{2}(u_c-v_c)+\frac{\nu_c}{m_{-c}} \sum_{i\in\mathcal{X}_{-c}} \bigg[\widetilde{\varepsilon}_i t_c + \sum_{j\in\mathcal{X}} \beta_{c,j} k(x^i,x^j)\bigg] + \nu_c{\widetilde{\theta}_c}  +\frac{\alpha_c}{m_c}\sum_{i \in \mathcal{X}_{c}} \xi_{c,i} \\
\text{s.t.} \quad & {\widetilde{\theta}_c} -\widetilde{\varepsilon}_i t_c + \sum_{j\in\mathcal{X}} \beta_{c,j} k(x^i,x^j) \geq - \xi_{c,i} \qquad i \in \mathcal{X}_c \\
& \beta_{c,i}=-\frac{\nu_c}{m_{-c}} \qquad i\in\mathcal{X}_{-c} \\
& t_c \geq \sqrt{\beta_{c}^\top K \beta_c}\\
& u_c+v_c = 1\\
& u_c \geq \sqrt{t_c^2+v_c^2}\\
 & \xi_{c,i}\geq 0 \qquad i \in \mathcal{X}_c.
\end{aligned}
\end{equation}
\end{linenomath}
Once $\beta_c$ and ${\widetilde{\theta}_c}$ are determined as solutions of \eqref{robust_nonlinear_SOCP}, the classification of a new observation $x\in\mathbb{R}^n$ is performed using the decision functions \eqref{dec_func_multiclass_nonlinear_argmin}-\eqref{dec_func_multiclass_nonlinear_argmax}. In these functions, the $\mathcal{H}$-norm of $\widetilde{w}_c$ is computed as in \eqref{norm_wtilde_H}, while the dot product $\langle \widetilde{w}_{c},\varphi(x) \rangle_{\mathcal{H}}$ is calculated similarly to \eqref{dot_product_proof}, i.e.:
\begin{linenomath}
\begin{equation*}
\langle \widetilde{w}_{c},\varphi(x) \rangle_{\mathcal{H}} = \sum_{i\in\mathcal{X}} \beta_{c,i} \langle \varphi(x^i), \varphi(x) \rangle_{\mathcal{H}} = \sum_{i\in\mathcal{X}} \beta_{c,i} k(x^i,x).
\end{equation*}
\end{linenomath}

Using the same complexity analysis outlined in Section \ref{sec_rob_lin_multiclass} for robust SOCP models with linear classifiers, we conclude that a valid upper bound on the total computational complexity of problem \eqref{robust_nonlinear_SOCP} is $ C \cdot \mathcal{O}\left((2m + 4)^{7/2} \log(1/\eta)\right) $.

\section{Experimental results} \label{sec_numerical_results}
In this section, we evaluate the performance of the deterministic TPMSVM models presented in Section \ref{sec_multiclass_deterministic} and their robust counterparts derived in Section \ref{sec_multiclass_robust} on a selection of five public-domain multiclass datasets. All models were implemented in MATLAB (v. 2024b) and solved using CVX (see \cite{GraBoy2008CVX,GraBoy2014CVX}) with MOSEK as solver (v. 9.1.9, see \cite{mosek}). All computational experiments were run on a MacBookPro17.1 with a chip Apple M1 of 8 cores and 16 GB of RAM memory. The codes were parallelized across the machine's cores using the MATLAB Parallel Computing Toolbox.

This section is organized as follows. In Section \ref{sec_results_dataset_setting}, we provide a description of the benchmark datasets and the experimental setting. The performance of the deterministic approaches are presented in Section \ref{sec_results_deterministic}, while the results for the robust models are discussed in Section \ref{sec_results_robust}.

\subsection{Datasets and experimental setting} \label{sec_results_dataset_setting}
We considered five public-domain real-world multiclass datasets from the \emph{UCI Machine Learning} repository (UCI, \cite{KelLonNot2023}) and from \emph{Open Data Canada} (ODC, \cite{Canada}). Relevant details of the datasets are summarized in Table \ref{tab_stat_dataset}.

\begin{table}[h!]
\centering
\resizebox{\textwidth}{!}{
\begin{tabular}{llllll}\toprule
Dataset & Source & Subject area & Observations $(m)$ & Features $(n)$ & Classes $(C)$
\\
\hline
Iris & UCI & Biology & 150 & 4 & 3
\\
Wine & UCI & Physics and Chemistry & 178 & 13 & 3
\\
Glass Identification & UCI & Physics and Chemistry & {214} & {9} & {6}
\\
Fuel Consumption Ratings & ODC & Transport & 374 & 7 & 3
\\
Car Evaluation & {UCI} & {Other} & {1728} & {6} & {4}
\\
\bottomrule
\end{tabular}}
\caption{Summary statistics of considered datasets.} \label{tab_stat_dataset}
\end{table}

To assess the performance of the proposed methodology, each dataset was randomly divided into training and testing sets, with a proportion 75\%-25\% of the total number of observations. The partition follows the \emph{proportional random sampling} strategy (see \cite{ChenTseYu2001}), thereby preserving the original class distribution both in the training and in the testing set. To avoid imbalances among the orders of magnitude of the features, each dataset was linearly scaled into the $n$-dimensional hypercube $[0,1]^n$. For kernel-based classifiers, we considered five polynomial kernels (homogeneous quadratic and cubic; inhomogeneous linear, quadratic and cubic) as well as the Gaussian kernel (see Table \ref{tab_kernels}).

As far as it concerns hyperparameters, for simplicity we set $\nu_c=\nu$ and $\alpha_c=\alpha$ for all $c=1,\ldots,C$. A grid-search procedure was then conducted to tune their values. Specifically, $\alpha$ was selected from the set $\{2^j|j=-6,-5,\ldots,5,6\}$, whereas the value of $\nu/\alpha$ was chosen from the set $\{0.1,0.3,0.5,0.7,0.9\}$. Additionally, the parameters $\gamma$ for the inhomogeneous polynomial kernels and $\sigma$ for the Gaussian kernel were selected from $\{2^j|j=-4,-3,\ldots,3,4\}$.

The best configuration of parameters was identified as the one maximizing the accuracy on the training set. The performance of the model was then evaluated on the testing set and the corresponding accuracy was computed. To ensure stability on the results, for each hold-out 75\%-25\% the computational experiments were performed over 50 random combinations of training and testing sets, and the results were finally averaged.

\subsection{Results for the deterministic multiclass TPMSVM models}  \label{sec_results_deterministic}
Tables \ref{tab_res_determ_mindist}-\ref{tab_res_determ_maxdist} report the results of deterministic models \eqref{multiclass_linear_TPMSVM} and \eqref{multiclass_nonlinear_TPMSVM} in terms of percentage average accuracy and standard deviation. The CPU time for training each model is outlined below the results.

\begin{table}[h!]
    \begin{subtable}[h]{\textwidth}
\centering
\resizebox{\textwidth}{!}{
\begin{tabular}{ll|l| *{6}l}\toprule
Dataset & & & \multicolumn{6}{c}{Kernel}\\
 & & Linear & Hom. quadratic & Hom. cubic & Inhom. linear & Inhom. quadratic & Inhom. cubic & Gaussian \\ \toprule
\multirow{2}{*}{Iris} & Accuracy & {$\mathbf{{92.81 \pm 4.29}}$} & {$78.54 \pm 7.67$} & {$85.68 \pm 5.58$} & {$\underline{92.38 \pm 3.89}$} & {$91.35 \pm 5.09$} & {$88.00 \pm 5.46$} & {$87.78 \pm 5.78$}
\\
& CPU time (s) &1.61&	1.19	&1.22	&10.61	&10.20	&10.32&	10.63
\\
\cmidrule{1-9}
\multirow{2}{*}{Wine} & Accuracy & {$97.00 \pm 2.08$} & {$96.59 \pm 2.98$} & {$95.23 \pm 3.22$} & {$96.64 \pm 2.57$} & {$\underline{\mathbf{97.41 \pm 2.43}}$} & {$95.91 \pm 3.47$} & {$39.45 \pm 1.10$}
\\
& CPU time (s) &1.50	&1.22&	1.11&	10.04&	9.98	&10.00	&10.20
\\
\cmidrule{1-9}
\multirow{2}{*}{{Glass}} & {Accuracy} & {$38.49 \pm 7.17$} & {$41.81 \pm 5.75$} &	{$44.26 \pm 5.22$} &	{$36.94 \pm 6.97$} &	{$36.34 \pm 6.95$} &	{$44.38 \pm 6.39$} &	{$\underline{\mathbf{61.17 \pm 5.23}}$}
\\
& {CPU time (s)} & 3.07	&2.35	&2.25	&19.77	&19.61&	19.87&	20.51
\\
\cmidrule{1-9}
\multirow{2}{*}{Fuel} & Accuracy & {$52.73 \pm 5.35$} & {$50.47 \pm 4.29$} & {$51.68 \pm 5.78$} & {$55.66 \pm 4.67$} & {$52.62 \pm 5.31$} & {$53.42 \pm 5.48$} & {$\underline{\mathbf{70.02 \pm 4.84}}$}
\\
& CPU time (s) & 1.68	&1.24	&1.14	&9.67	&9.92	&10.84&	11.17
\\
\cmidrule{1-9}
\multirow{2}{*}{{Car}} & {Accuracy} & {$73.15 \pm 2.00$} & {$77.71 \pm 2.15$} &	{$79.07 \pm 1.88$} &	{$76.62 \pm 1.99$} &	{$78.14 \pm 1.85$} &	{$\underline{\mathbf{79.12 \pm 2.20}}$} &	{$69.96 \pm 0.10$}
\\
& {CPU time (s)} & 2.06&	2.17	&3.05	&20.42	&20.23	&28.85	&46.70
\\
\bottomrule
\end{tabular}
}
\caption{Deterministic models - argmin decision functions \eqref{dec_func_multiclass_linear_argmin} and \eqref{dec_func_multiclass_nonlinear_argmin}.}
\label{tab_res_determ_mindist}
\end{subtable}
\vfill \vspace*{0.25cm}
    \begin{subtable}[h]{\textwidth}
        \centering
        \resizebox{\textwidth}{!}{
\begin{tabular}{ll|l| *{6}l}\toprule
Dataset & & & \multicolumn{6}{c}{Kernel}\\
 & & Linear & Hom. quadratic & Hom. cubic & Inhom. linear & Inhom. quadratic & Inhom. cubic & Gaussian \\ \toprule
\multirow{2}{*}{Iris} & Accuracy & {$70.22 \pm 2.47$} & {$94.59 \pm 4.01$} & {$92.65 \pm 6.23$} & {$76.92 \pm 5.60$} & {$\underline{\mathbf{95.30 \pm 3.53}}$} & {$94.49 \pm 3.54$} & {$87.62 \pm 4.64$}
\\
& CPU time (s) & 1.59	&1.15	&1.15	&10.07	&10.15	&10.29	&10.56
\\
\cmidrule{1-9}
\multirow{2}{*}{Wine} & Accuracy & {$\mathbf{96.55 \pm 3.44}$} & {$96.36 \pm 3.15$} & {$95.55 \pm 3.47$} & {$\underline{96.41 \pm 2.39}$} & {$95.95 \pm 2.53$} & {$95.82 \pm 2.36$} & {$39.45 \pm 1.10$}
\\
& CPU time (s) & 1.57&	1.13&	1.13&	10.14&	10.28	&10.26	&10.53
\\
\cmidrule{1-9}
\multirow{2}{*}{{Glass}} & {Accuracy} & {$46.42 \pm 6.40$} & {$45.13 \pm 7.11$} & {$47.28 \pm 6.99$} &	{$41.17 \pm 5.29$} &	{$45.74 \pm 7.09$} &	{$49.02 \pm 6.65$} &	{$\underline{\mathbf{61.58 \pm 5.21}}$}
\\
& {CPU time (s)} & 3.05	&2.29	&2.26	&19.69&	20.16&	19.51&	20.16
\\
\cmidrule{1-9}
\multirow{2}{*}{Fuel} & Accuracy & {$54.92\pm 5.90$} & {$56.86 \pm 4.90$} & {$56.65 \pm 6.08$} & {$57.57 \pm 5.20$} & {$57.72 \pm 5.27$} & {$58.37 \pm 4.23$} & {$\underline{\mathbf{66.60 \pm 4.53}}$}
\\
& CPU time (s) &1.69&	1.23	&1.13	&10.03	&10.18	&10.81	&11.28
\\
\cmidrule{1-9}
\multirow{2}{*}{{Car}} & {Accuracy} & {$75.49 \pm 1.64$} & {$81.50 \pm 1.71$} & {$81.82 \pm 1.38$} & {$77.23 \pm 1.44$} & {$82.25 \pm 1.52$} & {$\mathbf{\underline{84.07 \pm 1.92}}$} & {$69.96 \pm 0.10$}
\\
& {CPU time (s)} & 2.06	&2.20	&3.13	&19.16&	19.58&	28.78	&47.11
\\
\bottomrule
\end{tabular}
}
\caption{Deterministic models - argmax decision functions \eqref{dec_func_multiclass_linear_argmax} and \eqref{dec_func_multiclass_nonlinear_argmax}.}
\label{tab_res_determ_maxdist}
    \end{subtable}

\caption{Detailed percentage results of average accuracy and standard deviation over 50 runs of the deterministic models \eqref{multiclass_linear_TPMSVM} and \eqref{multiclass_nonlinear_TPMSVM}. Classification is performed according to the argmin and argmax decision functions, respectively in Table \ref{tab_res_determ_mindist} and \ref{tab_res_determ_maxdist}. The best result for the kernelized model is underlined. Overall, the best result is in bold.} \label{tab_res_determ}
\end{table}

First of all, by comparing the third column of Tables \ref{tab_res_determ_mindist}-\ref{tab_res_determ_maxdist} with the remaining ones, we observe that the best kernel-induced classifier consistently outperforms the separating hyperplane obtained from model \eqref{multiclass_linear_TPMSVM} across the majority of the tasks. Indeed, nonlinear classifiers provide better accuracy regardless of the decision function used, showing the strength of kernelized approaches. Even in the few cases where the linear classifier achieves slightly better accuracy (see the \emph{Iris} dataset with the argmin decision function and the \emph{Wine} dataset with the argmax decision function), the accuracy differences between the results are negligible ($0.43\%$ and $0.14\%$, respectively).

Secondly, the choice of the decision function plays an important role on the performance of the models, particularly for the \emph{Iris} dataset. In the linear case, the accuracy drops from $92.81\%$ (see Table \ref{tab_res_determ_mindist}) using the argmin decision function to $70.22\%$ (see Table \ref{tab_res_determ_maxdist}) with the argmax decision function. This highlights that the argmin decision function \eqref{dec_func_multiclass_linear_argmin} is better suited for solving linear classification tasks within this dataset. Conversely, in the nonlinear setting, the argmax decision function \eqref{dec_func_multiclass_nonlinear_argmax} yields the best results for certain kernels: accuracy improves from $78.54\%$ to $94.59\%$ with the homogeneous quadratic kernel ($+16.05\%$), and from $85.68\%$ to $92.65\%$ with the homogeneous cubic kernel ($+6.97\%)$. For the \emph{Wine} dataset, both decision functions lead to comparable performance across all kernels, suggesting flexibility in choice. Finally, for the remaining datasets (\emph{Glass}, \emph{Fuel}, and \emph{Car}) the argmax decision function generally delivers superior accuracy.

In terms of computational efficiency, kernel-induced models, and particularly the ones with the Gaussian kernel, require significantly more CPU time compared to linear classifiers. Therefore, we can conclude that there exists a trade-off between accuracy and performing speed. The final user must balance these factors based on their priority: faster results with good accuracy (linear classifiers) or longer runtimes with better predictive performance (nonlinear classifiers).

\subsection{Results for the robust multiclass TPMSVM models} \label{sec_results_robust}
To evaluate the performance of the robust approaches, we assume a constant radius for the uncertainty set \eqref{uncertainty_set_input_space} across all observations, i.e. $\varepsilon_i=\varepsilon$ for all $i\in\mathcal{X}$. We consider three increasing levels of perturbation ($\varepsilon=10^{-3}$, $10^{-2}$, $10^{-1}$) and three different $\ell_p$-norms ($p=1$, $2$, $\infty$). 

The results of the numerical experiments for the robust linear classification tasks are presented in Tables \ref{tab_res_linear_robust_mindist}-\ref{tab_res_linear_robust_maxdist}.

\begin{table}[h!]
    \begin{subtable}[h]{\textwidth}
\centering
\resizebox{\textwidth}{!}{
\begin{tabular}{ll| *{3}l| *{3}l| *{3}l}\toprule
Dataset & $\ell_p$-norm & \multicolumn{3}{c}{$p=1$} & \multicolumn{3}{c}{$p=2$} & \multicolumn{3}{c}{$p=\infty$}\\
& $\varepsilon$ & $10^{-3}$ & $10^{-2}$ & $10^{-1}$ & $10^{-3}$ & $10^{-2}$ & $10^{-1}$ & $10^{-3}$ & $10^{-2}$ & $10^{-1}$
\\ \toprule
\multirow{2}{*}{Iris} & Accuracy & {$92.70 \pm 3.79$} &	{$93.89 \pm 3.93$} &	{$\underline{\mathbf{95.35 \pm 3.32}}$} &	{$\underline{93.19 \pm 3.46}$} &	{$93.14 \pm 3.38$} & {$92.49 \pm 4.24$}	& {$91.95 \pm 3.72$} &	 {$\underline{92.38 \pm 4.29}$} & {$78.59 \pm 5.93$}
\\
& CPU time (s) & {1.54} & {1.46}  & {1.45} & {1.41} & {1.40} & {1.40} & {1.46} & {1.46} & {1.45}
\\
\cmidrule{1-11}
\multirow{2}{*}{Wine} & Accuracy & {$97.36 \pm 2.52$} & {$\underline{\mathbf{97.59 \pm 1.92}}$} & {$96.64 \pm 2.84$} & {$97.14 \pm 2.19$} & {$\underline{97.23 \pm 2.12}$} & {$96.59 \pm 2.12$} & {$96.64 \pm 2.40$} & {$\underline{96.91 \pm 2.09}$} & {$93.86 \pm 3.16$}
\\
& CPU time (s) & {1.44} & {1.46} & {1.46} & {1.42} & {1.42} & {1.46} & {1.48} & {1.46} & {1.45}
\\
\cmidrule{1-11}
\multirow{2}{*}{{Glass}} & {Accuracy} & {$\underline{39.32 \pm 6.06}$} & {$38.15 \pm 6.62$}  & {$34.00 \pm 7.14$} & {$\underline{38.64 \pm 7.32}$} & {$37.85 \pm 6.92$} & {$35.25 \pm 6.82$} & {$37.92 \pm 6.68$} & {$37.96 \pm 7.18$} & {$\underline{\mathbf{39.92 \pm 6.06}}$}
\\
& {CPU time (s)} & {3.04} & {2.91} & {2.85} & {2.92} & {2.83} & {2.85} & {2.96} & {2.93} & {2.97}
\\
\cmidrule{1-11}
\multirow{2}{*}{Fuel} & Accuracy & {$51.66 \pm 5.30$} & {$52.65 \pm 4.95$} & {$\underline{52.90 \pm 4.62}$} & {$51.48 \pm 4.48$} & {$\underline{\mathbf{55.40 \pm 5.59}}$} &	{$53.94 \pm 4.55$} & {$51.48 \pm 3.77$} & {$\underline{53.05 \pm 4.08}$} & {$51.01 \pm 3.71$}
\\
& CPU time (s) & {1.51} & {1.44} & {1.47} & {1.41} & {1.41} & {1.39} & {1.47} & {1.44} & {1.44}
\\
\cmidrule{1-11}
\multirow{2}{*}{{Car}} & {Accuracy} & {$71.99 \pm 1.96$} & {$\underline{72.00 \pm 2.19}$} & {$71.73 \pm 1.97$} & {$\underline{\mathbf{72.55 \pm 1.88}}$} & {$72.50 \pm 2.18$} & {$70.25 \pm 2.16$} & {$\underline{72.12 \pm 2.00}$} & {$70.67 \pm 2.40$} & {$58.74 \pm 2.48$}
\\
& {CPU time (s)} & {1.97} & {1.96} & {2.01} & {1.94} & {1.94} & {1.94} & {2.02} & {2.01} & {2.02}
\\
\bottomrule
\end{tabular}
}
\caption{Robust model - linear classifier - argmin decision function \eqref{dec_func_multiclass_linear_argmin}.}
\label{tab_res_linear_robust_mindist}
\end{subtable}
\vfill \vspace*{0.25cm}
    \begin{subtable}[h]{\textwidth}
\centering
\resizebox{\textwidth}{!}{
\begin{tabular}{ll| *{3}l| *{3}l| *{3}l}\toprule
%\multicolumn{11}{c}{Robust model - linear classifier - argmax decision function}\\
%\hline
Dataset & $\ell_p$-norm & \multicolumn{3}{c}{$p=1$} & \multicolumn{3}{c}{$p=2$} & \multicolumn{3}{c}{$p=\infty$}\\
& $\varepsilon$ & $10^{-3}$ & $10^{-2}$ & $10^{-1}$ & $10^{-3}$ & $10^{-2}$ & $10^{-1}$ & $10^{-3}$ & $10^{-2}$ & $10^{-1}$
\\ \toprule
\multirow{2}{*}{Iris} & Accuracy & {$69.57 \pm 2.88$} &	 {$\underline{70.11 \pm 3.01}$} & {$69.24 \pm 2.72$} &	{$\underline{70.65 \pm 2.94}$} &	{$70.38 \pm 3.78$} &	{$69.78 \pm 2.98$} &	{$70.54 \pm 3.24$} &	{$69.57 \pm 2.83$} &	{$\underline{\mathbf{84.38 \pm 4.83}}$}
\\
& CPU time (s) & {1.57} & {1.47} & {1.52} & {1.43} & {1.41} & {1.41} & 1.48 & {1.47} & {1.49}
\\
\cmidrule{1-11}
\multirow{2}{*}{Wine} & Accuracy & {$\underline{97.00 \pm 2.27}$} & {$96.82 \pm 2.05$} & {$96.68 \pm 2.80$} & {$97.09 \pm 2.06$} & {$96.36 \pm 2.30$} & {$\underline{\mathbf{97.14 \pm 2.29}}$} & {$96.32 \pm 2.29$} & {$\underline{97.00 \pm 2.45}$} & {$96.27 \pm 2.55$}
\\
& CPU time (s) & {1.48} & {1.54} & {1.46} & {1.42} & {1.42} & {1.40} & {1.50} & {1.49} & {1.51}
\\
\cmidrule{1-11}
\multirow{2}{*}{{Glass}} & {Accuracy} & {$45.13 \pm 6.13$} & {$\underline{\mathbf{47.85 \pm 7.05}}$}  & {$45.47 \pm 5.88$} & {$\underline{46.83 \pm 7.39}$} & {$45.89 \pm 7.65$} & {$44.75 \pm 5.30$} & {$46.53 \pm 7.78$} & {$\underline{47.09 \pm 6.22}$} & {$42.87 \pm 4.36$}
\\
& {CPU time (s)} & {2.92} & {2.89} & {2.95} & {2.82} & {2.84} & {2.82} & {2.93} & {2.95} & {2.94}
\\
\cmidrule{1-11}
\multirow{2}{*}{Fuel} & Accuracy & {$56.34\pm 6.04$} & {$53.57 \pm 4.15$} & {$\underline{\mathbf{60.65 \pm 3.48}}$}	& {$55.46 \pm 6.05$} & {$57.53 \pm 4.13$} & {$\underline{60.26 \pm 3.22}$} & {$54.30 \pm 5.04$} & {$\underline{57.12 \pm 5.58}$} & {$54.00 \pm 5.14$}
\\
& CPU time (s) & {1.49} & {1.46} & {1.43} & {1.39} & {1.39} & {1.40} & {1.43} & {1.43} & {1.44}
\\
\cmidrule{1-11}
\multirow{2}{*}{{Car}} & {Accuracy} & {$75.11 \pm 1.79$} & {$75.53 \pm 1.53$} & {$\underline{76.22 \pm 1.33}$} & {$75.71 \pm 1.35$} & {$75.81 \pm 1.36$} & {$\underline{77.72 \pm 1.75}$} & {$75.48 \pm 1.74$} & {$75.66 \pm 1.82$} & {$\underline{\mathbf{79.42 \pm 1.45}}$}
\\
& {CPU time (s)} & {2.04} & {2.05} & {2.04} & {1.94} & {1.97} & {1.97} & {2.05} & {2.07} & {2.05}
\\
\bottomrule
\end{tabular}
}
\caption{Robust model - linear classifier - argmax decision function \eqref{dec_func_multiclass_linear_argmax}.}
\label{tab_res_linear_robust_maxdist}
    \end{subtable}

\caption{Detailed percentage results of average accuracy and standard deviation over 50 runs of the robust model \eqref{robust_linear_tractable} with linear classifier. Classification is performed according to the argmin and argmax decision functions, respectively in Table \ref{tab_res_linear_robust_mindist} and \ref{tab_res_linear_robust_maxdist}. The best result for each $\ell_p$-norm is underlined. Overall, the best result is in bold.} \label{tab_res_linear_robust}
\end{table}

A comparison between these results and those in the third column of Tables \ref{tab_res_determ_mindist}-\ref{tab_res_determ_maxdist} shows that in nine out of ten cases the robust model outperforms the corresponding deterministic formulation. This is particularly evident when using the argmax decision function \eqref{dec_func_multiclass_linear_argmax}. For instance, considering the \emph{Iris} dataset with $p=\infty$ and $\varepsilon=10^{-1}$, the robust linear model achieves an accuracy of $84.38\%$ (see Table \ref{tab_res_linear_robust_maxdist}), compared to $70.22\%$ for the deterministic counterpart (see Table \ref{tab_res_determ_maxdist}). A significant improvement is observed for the \emph{Car} dataset too ($79.42\%$ vs $75.49\%$).

Regarding the choice of the $\ell_p$-norm used to define the uncertainty set, the results show that no single norm consistently yields the best performance across all tasks. Specifically, the $\ell_1$-norm leads to the highest accuracy in four cases, while the $\ell_2$- and $\ell_\infty$-norms are the best in three cases each. This variability suggests that the effectiveness of a particular norm is problem-dependent and may be influenced by the structure of the data.

In addition to the choice of norm, the perturbation level $\varepsilon$ also plays a crucial role in model performance. As $\varepsilon$ increases from $10^{-3}$ to $10^{-1}$, accuracy varies depending on the dataset and the choice of $\ell_p$-norm. The general trend indicates that increasing the perturbation level can be beneficial for the model. In fact, in only one out of ten cases the best accuracy is achieved at the lowest perturbation level ($\varepsilon = 10^{-3}$, see the \emph{Car} dataset in Table \ref{tab_res_linear_robust_mindist}), while in six out of ten cases the best results are obtained with the highest level of perturbation ($\varepsilon = 10^{-1}$). This suggests that introducing a controlled level of uncertainty can improve predictive performance. Particularly, a significant improvement is observed for the \emph{Iris} dataset with the argmax decision function and $\ell_\infty$-norm (see Table \ref{tab_res_linear_robust_maxdist}), where increasing $\varepsilon$ leads to a substantial gain in accuracy: from $70.54\%$ at $\varepsilon = 10^{-3}$ to $84.38\%$ at $\varepsilon = 10^{-1}$. On the other hand, overly conservative settings may deteriorate performance in some specific cases, as observed for the \emph{Car} dataset (see Table \ref{tab_res_linear_robust_mindist}), where too large perturbations lead to a decline in accuracy. All these findings highlight the importance of carefully selecting the perturbation level and an appropriate norm to balance robustness and predictive performance, potentially validated through empirical tuning.

In the case of the robust multiclass model \eqref{robust_nonlinear_SOCP} with nonlinear classifiers, we selected as kernel function the one that achieved the highest accuracy in the deterministic setting (see Tables \ref{tab_res_determ_mindist}-\ref{tab_res_determ_maxdist}). The results of the numerical experiments are reported in Tables \ref{tab_res_nonlinear_robust_mindist}-\ref{tab_res_nonlinear_robust_maxdist}.

\begin{table}[h!]
  \begin{subtable}[h]{\textwidth}
\centering
\resizebox{\textwidth}{!}{
\begin{tabular}{l| l| l| *{3}l| *{3}l| *{3}l}\toprule
%\multicolumn{12}{c}{Robust model - nonlinear classifier - argmin decision function}\\
%\hline
Dataset & Kernel & $\ell_p$-norm & \multicolumn{3}{c}{$p=1$} & \multicolumn{3}{c}{$p=2$} & \multicolumn{3}{c}{$p=\infty$}\\
& & $\varepsilon$ & $10^{-3}$ & $10^{-2}$ & $10^{-1}$ & $10^{-3}$ & $10^{-2}$ & $10^{-1}$ & $10^{-3}$ & $10^{-2}$ & $10^{-1}$
\\ \toprule
\multirow{2}{*}{Iris} & \multirow{2}{*}{Inhom. linear} & Accuracy & {$\underline{92.97 \pm 3.98}$} & {$92.86 \pm 4.15$} & {$90.32 \pm 6.93$} & {$\underline{\mathbf{93.30 \pm 3.79}}$} & {$92.27 \pm 3.54$} & {$89.24 \pm 7.96$} & {$91.46 \pm 3.76$} & {$\underline{92.76\pm3.64}$} & $86.76 \pm 10.98$
\\
& & CPU time (s) & {13.09} & {12.70} & {13.05} & {12.90} & {13.51} & {13.08} & {13.09} & {13.04} & {12.83}
\\
\cmidrule{1-12}
\multirow{2}{*}{Wine} & \multirow{2}{*}{Inhom. {quadratic}} & Accuracy &  {$96.14 \pm 2.61$} & {$\underline{96.86 \pm 2.94}$} & {$96.05 \pm 2.75$} & {$96.00 \pm 2.89$} & {$\underline{\mathbf{96.91 \pm 2.86}}$} & {$96.45 \pm 2.39$} & {$96.45 \pm 2.64$} & {$\underline{96.64 \pm 2.84}$} & {$7.09 \pm 3.14$}
\\
& & CPU time (s) & {101.51} & {101.51} & {99.93} & {100.82} & {97.19} & {99.03} & {101.68} & {106.09} & {99.37}
\\
\cmidrule{1-12}
\multirow{2}{*}{{Glass}} & \multirow{2}{*}{{Gaussian}} & {Accuracy} & {$\underline{\mathbf{39.86 \pm 8.22}}$} & {$37.26 \pm 3.46$} & {$36.79 \pm 4.16$} & {$36.56 \pm 3.34$} & {$\underline{37.26 \pm 3.46}$} & {$36.79 \pm 4.16$} & {$37.26 \pm 3.87$} & {$\underline{37.74 \pm 3.51}$} & {$\underline{37.74 \pm 3.34}$}
\\
& & {CPU time (s)} & {2598.09} & {2810.71} & {2869.73} & {2772.89} & {2893.72} & {2871.30} & {2701.93} & {2717.65} & {2877.96}
\\
\cmidrule{1-12}
\multirow{2}{*}{Fuel} & \multirow{2}{*}{Gaussian} & Accuracy & {$\underline{\mathbf{62.99 \pm 5.69}}$} & {$62.75 \pm 5.96$} & {$40.26 \pm 2.63$} & {$60.08 \pm 3.87$} & {$\underline{60.35 \pm 2.90}$} & {$39.52\pm 1.70$} & {$\underline{61.56\pm 2.68}$} & {$59.01 \pm 3.65$} & {$37.63 \pm 0.81$}
\\
& & CPU time (s) & {5341.74} & {5163.59} & {4985.68} & {4885.93} & {5186.56} & {4843.79} & {5083.51} & {5019.50} & {5197.60}
\\
\cmidrule{1-12}
\multirow{2}{*}{{Car}} & \multirow{2}{*}{{Inhom. cubic}} & {Accuracy} & {$76.24 \pm 1.95$} & {$\underline{\mathbf{76.59 \pm 2.18}}$} & {$67.29 \pm 2.20$} & {$\underline{75.72 \pm 2.27}$} & {$75.33 \pm 2.30$} & {$60.45\pm 2.60$} & {$\underline{75.74\pm 2.01}$} & {$74.61 \pm 2.26$} & {$42.17 \pm 5.02$}
\\
& & {CPU time (s)} & {2978.02} & {2979.47} & {2984.12} & {2954.32} & {2902.26} & {2909.32} & {3018.96} & {3033.63} & {2950.35}
\\
\bottomrule
\end{tabular}
}
\caption{Robust model - nonlinear classifier - argmin decision function \eqref{dec_func_multiclass_nonlinear_argmin}.}
\label{tab_res_nonlinear_robust_mindist}
\end{subtable}
\vfill \vspace*{0.25cm}
    \begin{subtable}[h]{\textwidth}
\centering
\resizebox{\textwidth}{!}{
\begin{tabular}{l| l| l| *{3}l| *{3}l| *{3}l}\toprule
%\multicolumn{12}{c}{Robust model - nonlinear classifier - argmax decision function}\\
%\hline
Dataset & Kernel & $\ell_p$-norm & \multicolumn{3}{c}{$p=1$} & \multicolumn{3}{c}{$p=2$} & \multicolumn{3}{c}{$p=\infty$}\\
& & $\varepsilon$ & $10^{-3}$ & $10^{-2}$ & $10^{-1}$ & $10^{-3}$ & $10^{-2}$ & $10^{-1}$ & $10^{-3}$ & $10^{-2}$ & $10^{-1}$
\\ \toprule
\multirow{2}{*}{Iris} & \multirow{2}{*}{Inhom. {quadratic}} & Accuracy & {$93.78 \pm 3.75$} & {$\underline{95.46 \pm 3.84}$} & {$93.51 \pm 3.98$} & {$94.27 \pm 3.81$} & {$94.97 \pm 3.86$} & {$\underline{95.14 \pm 3.94}$} & {$94.43 \pm 3.47$} & {$\underline{\mathbf{95.46\pm 3.34}}$} & {$94.32 \pm 3.83$}
\\
& & CPU time (s) & {13.27}& {	13.39}& {	13.38}& {	13.24	}& {13.29	}& {13.42}& {	13.38}& {	13.36}& {	13.38}
\\
\cmidrule{1-12}
\multirow{2}{*}{Wine} & \multirow{2}{*}{Inhom. linear} & Accuracy & {$\underline{\mathbf{97.23 \pm 2.40}}$} & {$96.23 \pm 2.85$}	& {$96.82 \pm 2.05$} & {$\underline{96.68 \pm 2.44}$} & {$95.91 \pm 2.60$} & {$96.50 \pm 2.48$} & {$96.32 \pm 2.24$} & {$96.50 \pm 2.52$} & {$\underline{96.82 \pm 2.60}$}
\\
& & CPU time (s) & {102.90} & {102.28} & {98.78} & {101.24} & {98.27} & {96.16} & {100.98} & {104.46} & {99.43}
\\
\cmidrule{1-12}
\multirow{2}{*}{{Glass}} & \multirow{2}{*}{{Gaussian}} &{Accuracy} & {$\underline{\mathbf{38.92 \pm 4.39}}$} & {$35.14 \pm 6.29$} & {$35.85 \pm 5.43$} & {$\underline{37.74 \pm 3.77}$} & {$35.38 \pm 3.74$} & {$30.42 \pm 8.64$} & {$\underline{36.32 \pm 5.02}$} & {$35.14 \pm 4.39$} & {$29.48 \pm 9.02$}
\\
& & {CPU time (s)} & {2583.94} & {2803.87} & {2856.64} & {2753.53} & {2884.25} & {2869.89} & 2697.41 & {2705.17} & {2855.15}
\\
\cmidrule{1-12}
\multirow{2}{*}{Fuel} & \multirow{2}{*}{Gaussian} & Accuracy & {$\underline{\mathbf{56.82 \pm 6.64}}$} & {$55.10 \pm 5.26$} & {$49.03 \pm 5.11$} & {$\underline{57.80 \pm 5.10}$} & {$55.91 \pm 4.60$} & {$52.15 \pm 5.60$} & {$\underline{54.44 \pm 6.99}$} & {$54.03 \pm 4.48$} & {$53.23 \pm 6.43$}
\\
& & CPU time (s) & {5161.05} & {5102.44} & {4914.07} & {4832.09} & {4981.42} & {4875.20} & {5015.18} & {4988.35} & {5209.15}
\\
\cmidrule{1-12}
\multirow{2}{*}{{Car}} & \multirow{2}{*}{{Inhom. cubic}} & {Accuracy} & {$85.10 \pm 0.97$} & {$86.26 \pm 1.25$} & {$\underline{87.24 \pm 1.16}$} & {$85.24 \pm 1.85$} &  {$85.58 \pm 1.69$} & $\underline{\mathbf{87.56 \pm 2.02}}$ & {$84.98 \pm 1.78$} & {$\underline{86.21 \pm 1.57}$} & {$82.29 \pm 0.98$}
\\
& & {CPU time (s)} & {3060.21} & {3033.12} & {2984.90} & {2930.46} & {2941.68} & {2875.33} & {3026.86} & {3035.45} & {2981.57}
\\
\bottomrule
\end{tabular}
}
\caption{Robust model - nonlinear classifier - argmax decision function \eqref{dec_func_multiclass_nonlinear_argmax}.}
\label{tab_res_nonlinear_robust_maxdist}
    \end{subtable}

\caption{Detailed percentage results of average accuracy and standard deviation over 50 runs of the robust model \eqref{robust_nonlinear_SOCP} with nonlinear classifier. Classification is performed according to the argmin and argmax decision functions, respectively in Table \ref{tab_res_nonlinear_robust_mindist} and \ref{tab_res_nonlinear_robust_maxdist}. For each dataset, the kernel in the second column is chosen according to the corresponding best deterministic result of Tables \ref{tab_res_determ_mindist}-\ref{tab_res_determ_maxdist}. The best result for each $\ell_p$-norm is underlined. Overall, the best result is in bold.} \label{tab_res_nonlinear_robust}
\end{table}

The robust model generally yields an improved accuracy over the deterministic setting, particularly when using the argmax decision function (see Table \ref{tab_res_nonlinear_robust_maxdist}). For example, in the \emph{Wine} dataset, the robust model with an inhomogeneous linear kernel achieves an accuracy of $97.23\%$, compared to $96.41\%$ in the deterministic case. In general, the robust approach provides an accuracy that is either superior or comparable to the deterministic counterpart, especially when polynomial kernels are used, as observed in the \emph{Iris}, \emph{Wine}, and \emph{Car} datasets. Conversely, when using the Gaussian kernel (see the \emph{Glass} and \emph{Fuel} datasets), the performance tends to decline in the robust setting. This reduction is mainly due to the Gaussian kernel's sensitivity to data perturbations, which can negatively impact the decision boundary under strong uncertainty. It is worth noting that the best performances in the robust context are most frequently obtained using the $\ell_1$-norm and a small perturbation level ($\varepsilon = 10^{-3}$), suggesting a clear preference for these settings across the considered datasets.

In terms of CPU time, the robust approach with linear classifiers exhibits comparable computational times to the deterministic case. In contrast, when using kernel-based classifiers, the CPU time increases significantly, particularly for the Gaussian kernel (see the \emph{Glass} and \emph{Wine} datasets) and for datasets with a large number of observations (see the \emph{Car} dataset). This increase in computational time for larger datasets is consistent with the complexity analysis discussed at the end of Section \ref{sec_rob_nonlin_multiclass}.

Finally, in Table \ref{tab_comparison}, we compare the performance of our proposed method with the deterministic and robust SVM formulations presented in \cite{MagSpi2023}, considering both linear and nonlinear classifiers. We observe that in three out of five cases (see the \emph{Iris}, \emph{Wine}, and \emph{Glass} datasets) the robust formulations outperform their deterministic counterparts, emphasizing the benefit of incorporating uncertainty in SVM models. Furthermore, in three out of five datasets (see the \emph{Iris}, \emph{Wine}, and \emph{Fuel} datasets), our approach achieves the highest accuracy overall, demonstrating its competitiveness with state-of-the-art robust SVM alternatives.

\begin{table}[h!]
\centering
\resizebox{\textwidth}{!}{
\begin{tabular}{>{\color{black}}l| >{\color{black}}l >{\color{black}}l >{\color{black}}l >{\color{black}}l| >{\color{black}}l >{\color{black}}l >{\color{black}}l >{\color{black}}l}\toprule
Dataset & \multicolumn{4}{c|}{{Deterministic formulations}} & \multicolumn{4}{c}{{Robust formulations}}
\\
& \multicolumn{2}{c}{{Linear classifiers}} & \multicolumn{2}{c|}{{Nonlinear classifiers}} & \multicolumn{2}{c}{{Linear classifiers}} & \multicolumn{2}{c}{{Nonlinear classifiers}} \\
& Tables \ref{tab_res_determ_mindist}-\ref{tab_res_determ_maxdist} & \cite{MagSpi2023} & Tables  \ref{tab_res_determ_mindist}-\ref{tab_res_determ_maxdist}& \cite{MagSpi2023} & Tables \ref{tab_res_linear_robust_mindist}-\ref{tab_res_linear_robust_maxdist} & \cite{MagSpi2023} & Tables  \ref{tab_res_nonlinear_robust_mindist}-\ref{tab_res_nonlinear_robust_maxdist}& \cite{MagSpi2023}
\\
\toprule
Iris & $92.81$ & $88.92$ & $\underline{95.30}$ & $94.59$ & $95.35$ & $78.05$ & $\underline{\mathbf{95.46}}$ & $94.76$
\\
\cmidrule{1-9}
Wine & $97.00$ & $96.00$ & $\underline{97.41}$ & $97.14$ & $\underline{\mathbf{97.59}}$ & $96.32$ & $97.23$ & $96.09$
\\
\cmidrule{1-9}
Glass & $46.42$ & $46.98$ & $61.58$ & $\underline{65.13}$ & $47.85$ & $34.75$ & $39.86$ & $\underline{\mathbf{65.96}}$
\\
\cmidrule{1-9}
Fuel & $54.92$ & $56.62$ & $\underline{\mathbf{70.02}}$ & $58.30$ & $60.65$ & $\underline{69.16}$ & $62.99$ & $57.53$
\\
\cmidrule{1-9}
Car & $75.49$ & $75.42$ & $84.07$ & $\underline{\mathbf{96.00}}$ & $79.42$ & $69.99$ & $\underline{87.56}$ & 69.02
\\
\bottomrule
\end{tabular}
}
\caption{Percentage average accuracy comparison between deterministic and robust results obtained from the SVM formulations proposed in this work and in \cite{MagSpi2023}. For each formulation, the best result is underlined. Overall, the best result is in bold.}  \label{tab_comparison}
\end{table}

\section{Conclusions} \label{sec_conclusions}
In this paper, we have introduced novel Twin Parametric Margin Support Vector Machine (TPMSVM) models to address multiclass classification tasks under data uncertainty. From a methodological perspective, we have extended the TPMSVM techniques proposed in \cite{Peng2011} to handle multiclass settings, employing both linear and kernel-induced decision boundaries. As final decision functions, we have proposed two alternatives based on the argmin and argmax formulations.

Given the uncertain nature of real-world data, we have adopted a Robust Optimization approach by constructing a bounded-by-$\ell_p$-norm uncertainty set around each input data. This strategy aims to protect the deterministic models against uncertainties and prevents the worsening of the classification's performance. Then, we have derived robust counterparts of the deterministic models, and provided tractable reformulations in the form of Second Order Cone Programming models.

To validate the effectiveness of our proposal, we have evaluated the multiclass models on real-world datasets, considering different combinations of kernels and decision functions, while exploring various $\ell_p$-norms and levels of perturbations within the robust setting. The results of our experimental analysis show that robust solutions consistently yield higher accuracy compared to their deterministic counterparts, particularly when dealing with linear and polynomial kernels, but at the cost of increased computational time.

Regarding future research, various streams can originate from this work. First of all, extending the robust TPMSVM framework to account for uncertainties in the labels of input data could enhance the models' generalization capabilities. Secondly, different optimization under uncertainty methodologies could be explored to further robustify the TPMSVM approach. In particular, Chance-Constrained Programming and Distributionally Robust Optimization could represent effective strategies for enhancing robustness in multiclass classification under uncertainty. Thirdly, improving the interpretability and explainability of robust TPMSVM models, particularly when applied in sensitive domains such as medicine, merit additional investigation in future works. As robustness can make the models more complex and opaque, developing tools to analyze feature relevance under uncertainty and trace the effect of perturbations could offer valuable insights to the final users. Finally, from a computational perspective, two extensions are worth exploring. On the one hand, adopting alternative hyperparameter tuning techniques, such as Bayesian optimization, could lead to improved performance and computational efficiency. On the other hand, when working with large datasets, splitting the data into three subsets (training, tuning, and testing sets) rather than only two (training and testing sets), as suggested in \cite{BischlETAL2023}, could help mitigate overfitting and provide a more reliable assessment of model performance.

%
%Secondly, from a computational perspective, employing alternative hyperparameters tuning techniques such as Bayesian optimization, along with a separate validation set, could lead to better performance and efficiency, potentially preventing the problem of overfitting.
%
%
%Thirdly, improving the interpretability and explainability of robust TPMSVM models, particularly when applied in sensitive domains such as medicine, merit additional investigation in future works. As robustness can make the models more complex and opaque, developing tools to analyze feature relevance under uncertainty and trace the effect of perturbations could offer valuable insights to the final users. 
%
%
%Finally, different optimization under uncertainty methodologies could be explored to further robustify the TPMSVM approach. In particular, Chance-Constrained Programming and Distributionally Robust Optimization offer significant directions for enhancing robustness in multiclass classification under uncertainty.

\section*{CRediT authorship contribution statement}
\textbf{Renato De Leone:} Conceptualization, Formal analysis, Funding acquisition, Investigation, Methodology, Resources, Supervision, Validation, Visualization, Writing - original draft, Writing - review \& editing. \textbf{Francesca Maggioni:} Conceptualization, Formal analysis, Funding acquisition, Investigation, Methodology, Project administration, Resources, Supervision, Validation, Visualization, Writing - original draft, Writing - review \& editing. \textbf{Andrea Spinelli:} Conceptualization, Data curation, Formal analysis, Investigation, Methodology, Software, Validation, Visualization, Writing - original draft, Writing - review \& editing.

\section*{Declaration of competing interest}
The authors declare that they have no known competing financial interests or personal relationships that could have appeared to influence the work reported in this paper.

\section*{Acknowledgements}
This work has been supported by ``ULTRA OPTYMAL - Urban Logistics and sustainable TRAnsportation: OPtimization under uncertainTY and MAchine Learning'', a PRIN2020 project funded by the Italian University and Research Ministry (grant number 20207C8T9M).

This study was also carried out within the MOST - Sustainable Mobility National Research Center and received funding from the European Union Next-GenerationEU (PIANO NAZIONALE DI RIPRESA E RESILIENZA (PNRR) - MISSIONE 4 COMPONENTE 2, INVESTIMENTO 1.4 - D.D. 1033 17/06/2022, CN00000023), Spoke 5 ``Light Vehicle and Active Mobility'' and by the PNRR MUR project ECS$\_$00000041-VITALITY- CUP J13C22000430001, Spoke 6. This manuscript reflects only the authors' views and opinions, neither the European Union nor the European Commission can be considered responsible for them.

\section*{References}

\bibliography{tpmsvm_bib}

\end{document}